\theoremstyle{plain}
\theoremstyle{definition}
\theoremstyle{remark}
\icmltitlerunning{Internet Explorer: Targeted Representation Learning on the Open Web}
\newcommand{\eg}{{\it e.g.}}
\newcommand{\ie}{{\it i.e.}}
\newcommand{\wrt}{{\it w.r.t. }}
\newcommand{\EA}{\end{array}}
\newcommand{\BA}{\begin{array}}
\newcommand{\green}[1]{\textcolor{ForestGreen}{#1}}
\newcommand{\red}[1]{\textcolor{red}{#1}}
\newif\ifchanges
\newcommand\ppnumber[1]{%
        \ppnum=#1\relax
        \ifnum\ppnum<0
                $-$%
                \ppnum=-\ppnum
        \fi
        \let\pptemp\empty
        \loop\ifnum\ppnum>999
                \count255=\ppnum
                \divide\ppnum by1000
                \count255=\numexpr \count255 - 1000*\ppnum \relax
                \edef\pptemp{,\ifnum\count255<100 0\ifnum\count255<10 0\fi\fi
                             \the\count255 \pptemp}%
        \repeat
        \the\ppnum
        \pptemp
}
\begin{document}

\twocolumn[
\icmltitle{Internet Explorer: Targeted Representation Learning on the Open Web}

\icmlsetsymbol{equal}{*}

\begin{icmlauthorlist}
\icmlauthor{Alexander C. Li}{equal,cmu}
\icmlauthor{Ellis Brown}{equal,cmu}
\icmlauthor{Alexei A. Efros}{ucb}
\icmlauthor{Deepak Pathak}{cmu}
\end{icmlauthorlist}

\icmlaffiliation{cmu}{Carnegie Mellon University}
\icmlaffiliation{ucb}{University of California, Berkeley}

\icmlcorrespondingauthor{Alexander Li}{alexanderli@cmu.edu}
\icmlcorrespondingauthor{Ellis Brown}{ellisbrown@cmu.edu}

\icmlkeywords{Machine Learning, ICML, neural networks, self-supervised learning, computer vision, Internet, exploration}

\vskip 0.3in
]

\printAffiliationsAndNotice{\icmlEqualContribution} %

\begin{abstract}
    Modern vision models typically rely on fine-tuning general-purpose models pre-trained on large, static datasets. These general-purpose models only capture the knowledge within their pre-training datasets, which are tiny, out-of-date snapshots of the Internet---where billions of images are uploaded each day. 
    We suggest an alternate approach: rather than hoping our static datasets transfer to our desired tasks after large-scale pre-training, we propose dynamically utilizing the Internet to quickly train a small-scale model that does extremely well on the task at hand.
    Our approach, called Internet Explorer, explores the web in a self-supervised manner to progressively find relevant examples that improve performance on a desired target dataset. It cycles between searching for images on the Internet with text queries, self-supervised training on downloaded images, determining which images were useful, and prioritizing what to search for next. We evaluate Internet Explorer across several datasets and show that it outperforms or matches CLIP oracle performance by using just a single GPU desktop to actively query the Internet for 30--40 hours. Results, visualizations, videos, and code on our
    website:
    \href{https://internet-explorer-ssl.github.io/}{\url{internet-explorer-ssl.github.io/}}

\end{abstract}

\section{Introduction}
Suppose you have a small dataset and need to train a model for some task, say classification.
A pipeline that has become standard today is to download the latest pre-trained deep network and fine-tune it on your own small dataset. This pre-trained model used to be ImageNet-based~\cite{deng2009imagenet,he2016deep} and now would probably be CLIP~\cite{radford2021learning}. The implicit goal set by the community for such pre-trained models is that they should transfer well to any kind of downstream task not known in advance. This has led to a race to build ultra-large-scale models in terms of computation, model size, and dataset size. But is this goal of building an ``omniscient'' pre-trained model that can work on any future downstream task even feasible? Perhaps not, as our world is continually changing.
Although the size of the pretraining datasets has grown from 1.2M~~\cite{deng2009imagenet} to 5B~\cite{schuhmann2022laion} images, what has not changed at all is their nature: these datasets are curated and, more importantly, \textit{\textbf{static}}. For instance, the portion of ImageNet curated before 2007 has no idea what an iPhone is.
Furthermore, although a few hundred million images represent a staggering quantity of visual data, they are minuscule compared to the entire Internet, where billions of new photos are uploaded every day.
Thus, current static datasets, however big they become,
fail to capture the richness and dynamic nature of the data available on the Internet.
Moreover, as our static datasets grow, they require increasingly inaccessible amounts of compute.

\begin{figure}[t]
\centering
\includegraphics[width=\linewidth]{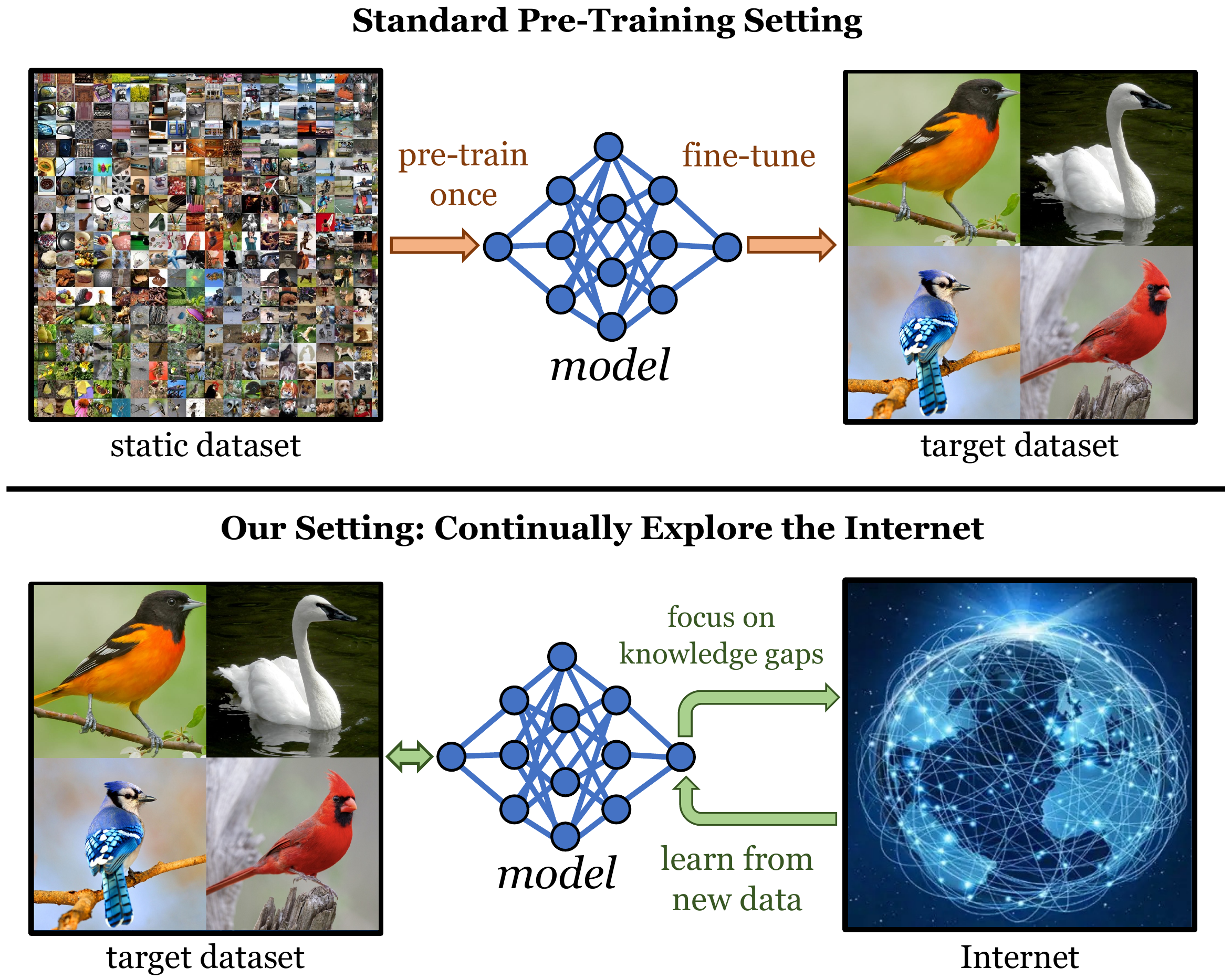}
\vspace{-0.25in}
\caption{Given unlabeled data for a target task, our approach, Internet Explorer, searches the Internet to progressively find more and more relevant training data via self-supervised exploration.}
\label{fig:teaser}
\vspace{-0.15in}
\end{figure}

In this paper, we 
rethink the idea of \textit{\textbf{generic}} large-scale pretraining and propose an alternate paradigm: train a small-scale but up-to-date model geared towards the \textit{\textbf{specific}} downstream task of interest. To do so, we look beyond static datasets and \textit{treat the Internet itself as a dynamic, open-ended dataset}.  
Unlike conventional datasets, which are expensive to expand and grow stale with time, 
the Internet is dynamic, rich, grows automatically, and is always up to date.
Its continuously evolving nature also means we cannot hope to ever download it or train a model, whether large or small, on all of it.

We propose that the Internet can be treated as a special kind of dataset---one that exists out there, ready to be queried as needed to quickly train a customized model for a desired task.
We draw an analogy to reinforcement learning, where even though the task is known, finding a policy that can generate the desired behavior is non-trivial due to the high complexity of the state space. Hence, most approaches rely on some form of exploration to figure out what actions the agent should take so that it quickly finds high-reward states. Inspired by this analogy, we formulate a disembodied, online agent we call {\em Internet Explorer}, that actively queries standard search engines to find relevant visual data that improve feature quality on a target dataset (see \cref{fig:teaser}). The agent's actions are text queries made to search engines, and the observations are the data obtained from the search.

The queries made by Internet Explorer improve over time. It cycles between searching for images on the Internet with text queries, self-supervised training on downloaded images, determining which images are relevant to the target dataset, and prioritizing what to search for next (see \cref{fig:method}). We also bootstrap Internet Explorer using existing pre-trained models such as MoCo-v3~\cite{he2020momentum} and obtain a significant boost on the target datasets.

Our setting is different from active learning~\cite{settles2009active}, where the goal is to selectively obtain labels for data points from a fixed dataset. In contrast, Internet Explorer continually expands the size of its dataset and requires no labels for training, even from the target dataset. 
Some prior works have also discussed ways to leverage the Internet as an additional source of data. NELL~\cite{carlson2010toward} proposed a way to continually scrape web pages to learn new concepts and relationships, which are periodically curated by a human in the loop. NEIL~\cite{chen2013neil} builds on NELL's dictionary to search visual data and develop visual relationships. Both are semi-supervised methods to gather general ``common-sense'' knowledge from the Internet. In contrast, we perform an actively improving \textit{directed} search to perform well on target data, in a fully self-supervised manner. Recent work~\cite{jiang2021improving} follows a similar setting but searches a static dataset and not the Internet.

We evaluate Internet Explorer across 7 datasets, including 4 fine-grained datasets, PASCAL VOC, ImageNet-100, and FMoW-WILDS.
We search for relevant images using Google; however, the method is compatible with any text-based search engine or even a static dataset (see \cref{subsec:search_engine_main}).
We compare against several strong baselines, including CLIP, on downstream tasks. Note that CLIP acts as an oracle for our approach because it has likely already seen all or more queries that Internet Explorer makes.
In most scenarios, Internet Explorer either outperforms or matches the CLIP oracle using only a single 3090 GPU desktop machine that runs for 30--40 hours, makes over 10K progressively improving queries, and downloads over 1M relevant Internet images for each target dataset.

\begin{figure}[t]
    \centering
    \includegraphics[width=\linewidth]{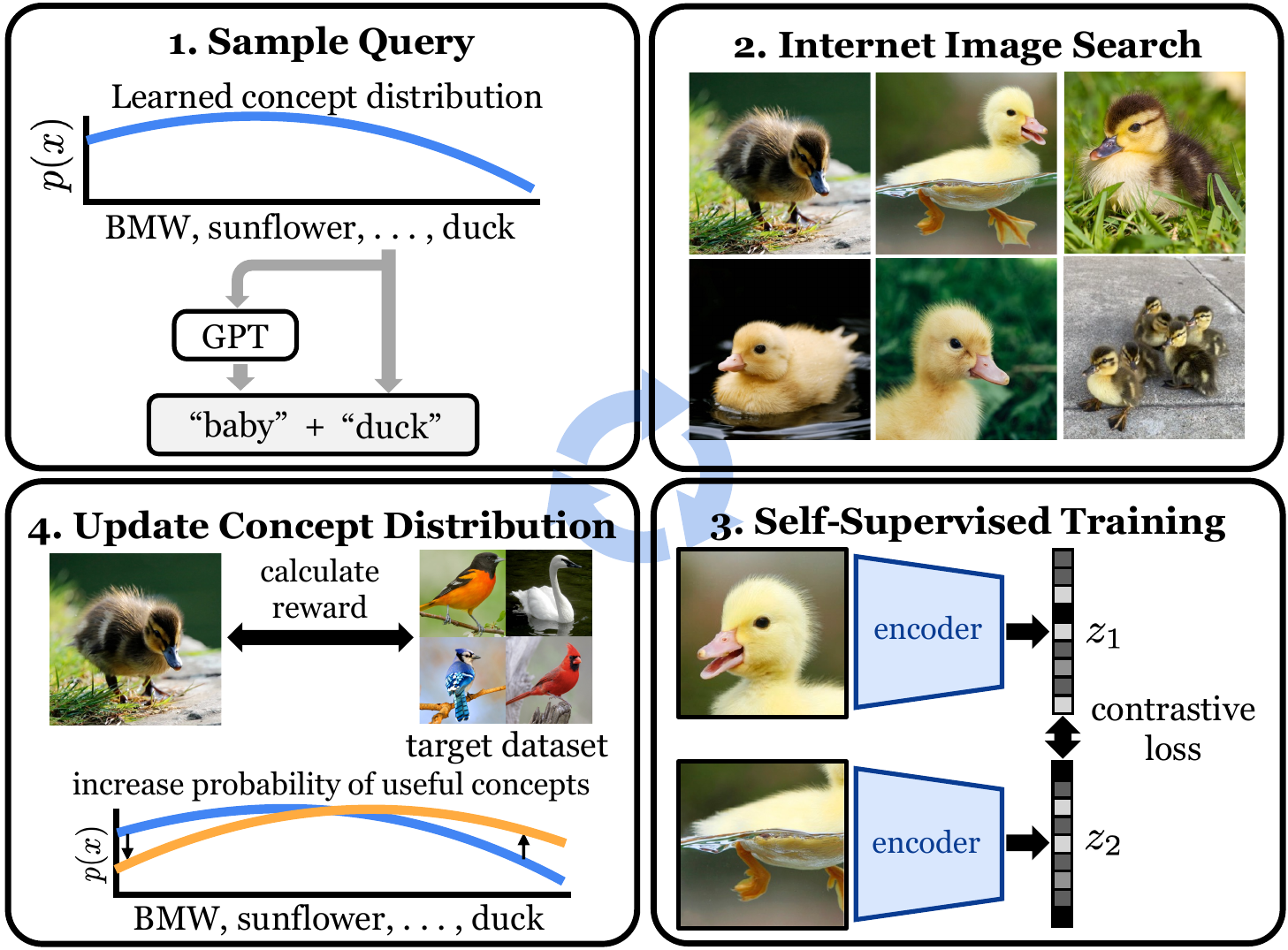}
    \vspace{-2em}
    \caption{\textbf{Overview of Internet Explorer.} Our goal is to efficiently search the Internet for images that improve our performance on a target dataset.
     In each iteration, we first generate text queries by combining a concept sampled from a learned distribution with a GPT-generated descriptor (\S\ref{subsec:text_query_generation}, \S\ref{subsec:tiering}). Next, we query search engines with the resulting phrase and download the top 100 image results (\S\ref{subsec:text_to_image_search}, \ref{subsec:search_engine_main}). We add these images to the set of previously downloaded images and perform self-supervised training on the combined dataset (\S\ref{subsec:ssl}). Finally, we evaluate the relevance of the new images and update our concept distribution to increase the likelihood of similar queries if their images were similar to the target dataset (\S\ref{subsec:image_rel_reward}, \S\ref{subsec:unseen_reward}).
    }
    \label{fig:method}
    \vspace{-1em}
\end{figure}

\section{Internet Explorer: An Online Agent}

We focus on the problem of efficiently improving representations for some target dataset by acquiring Internet data.
We make as few assumptions as possible and assume that we have only unlabeled training data from the target dataset. 
Successful representation learning in this setting would lead to better performance on the target dataset distribution for standard tasks like classification and detection, and potentially others where the labels are not semantic (e.g., depth prediction or robotics).
An overview of the Internet Explorer method is depicted in \cref{fig:method} and described in \cref{alg:internet_explorer}.

\subsection{Text-to-image Search}
\label{subsec:text_to_image_search}
We discover and download images from the full breadth of the Internet by querying text-to-image search engines, which return images based on their captions and surrounding text. Text-to-image search is fast, finds diverse images from across the Internet, and enables searches for vastly different queries simultaneously. Note that text-to-image search is noisy and makes use of weak supervision (the image-text pairing on webpages). Thus, we only perform self-supervised training on the downloaded images. We use a public codebase to query Google Images, which can download the top 100 images for each query~\cite{hardikvasa, Joeclinton1}. We also try other search engines in Section~\ref{subsec:search_engine_main}.

\subsection{Text Query Generation}
\label{subsec:text_query_generation}
As text queries are our only input interface with the Internet, it is crucial that we can generate diverse queries that correspond to a variety of visual categories. Specificity is also important. Once a useful visual category is identified, generating fine-grained variants of the query is necessary to obtain data for all visual variations in the category.
We construct queries by combining two components: 
\begin{enumerate}[noitemsep,topsep=0pt]
    \item \textit{Concepts} specify semantic categories such as people, places, or objects. %
    \item \textit{Descriptors} are modifiers that generate variations in appearance. %
\end{enumerate}

We draw our concepts from the WordNet hierarchy \cite{miller1995wordnet}, which consists of $146{,}347$ noun lemmas. Not all of these lemmas are visual, but the vocabulary still covers an incredible range of topics (see examples in \cref{sec:wordnet_lemmas}).
To generate a text query, we first sample a concept from a learned distribution over our vocabulary. This discrete distribution is defined by our estimates of how relevant each concept in the vocabulary is at the current time (see Section \ref{subsec:image_rel_reward} for details on estimating rewards and Section \ref{subsec:tiering} for the distribution).
Given a sampled concept, we can generate a descriptor by prompting a GPT-J language model \cite{gpt-j} with examples of descriptor-concept pairs (details in \cref{sec:gptj-descriptors}).
Finally, as shown in Step 1 of Figure \ref{fig:method}, we concatenate the concept and descriptor. If our concept is ``duck'' and the GPT-generated descriptor is ``baby,'' our search engine query is ``baby duck.''

\subsection{Self-supervised Training}
\label{subsec:ssl}

We use self-supervised learning (SSL) to learn useful representations from the unlabeled images that we download from the Internet. 
Internet Explorer is compatible with any SSL algorithm that uses images or image-text pairs, including contrastive~\cite{he2020momentum,chen2020simple}, non-contrastive~\cite{grill2020bootstrap,zbontar2021barlow,bardes2021vicreg,caron2021emerging}, masking-based~\cite{bao2021beit,he2022masked}, or multimodal~\cite{radford2021learning} approaches. 
For speed and stability reasons, we use the MoCo-v3 algorithm~\cite{chen2021empirical}, which trains encoders $f_q$ and $f_k$ on augmentations $(x_1, x_2)$ of the same image to output vectors $q = f_q(x_1)$ and $k = f_k(x_2)$. $f_q$ is trained to minimize the InfoNCE loss \cite{oord2018representation}:
\begin{align}
    \mathcal L_q = -\log \frac{\exp(q \cdot k^+ / \tau)}{\exp (q \cdot k^+ / \tau) + \sum_{k^-} \exp (q \cdot k^- / \tau) }
\label{eq:moco_loss}
\end{align}
$k^+$ corresponds to $f_k$'s output on the other augmentation of the image used to compute $q$, and the set of negative examples $\{k^-\}$ corresponds to $f_k$'s output on other images in the batch. The temperature $\tau$ is set to $1$ by default. $f_k$ consists of a base encoder, a projection MLP, and a prediction head, whereas $f_q$ is the exponential moving average of the base encoder and projection MLP from $f_k$. By training $q$ and $k^+$ to be similar across image augmentations, MoCo-v3 encourages the network to learn high-level semantic features.

Before turning to the Internet, we initialize a ResNet-50 model \cite{he2016deep} using a MoCo-v3 checkpoint trained offline for 100 epochs on ImageNet and then fine-tuned on the target dataset. Without using labels, we select the best starting checkpoint by early stopping on the SSL loss, which highly correlates with target accuracy~\cite{li2022understanding}.
In each iteration of our method, we use MoCo-v3 to fine-tune our encoder on a mixture of newly downloaded, previously downloaded, and target dataset images.

\subsection{Image Relevance Reward}
\label{subsec:image_rel_reward}
We want to rank newly downloaded images by how much they improve our features for the target dataset. This allows us to (a) prioritize taking gradient steps on useful images, and (b) understand what to search for in subsequent iterations. Unfortunately, it is challenging to directly measure the effect of an individual training example on performance. Numerous techniques have been proposed \cite{koh2017understanding,feldman2020neural,paul2021deep,ilyas2022datamodels}, but they all require extensive and repeated training on new images to estimate their impact. 

Instead of trying to precisely measure what is learned from each image, we use its similarity to the target dataset as a proxy for being relevant to training.
We rank the downloaded images by their similarity in representation space to the target dataset images; those most similar to the target dataset induce larger contrastive loss since each $\exp(q \cdot k^-)$ term in the denominator of Eq.~\ref{eq:moco_loss} is larger when the negative examples $\{k^-\}$ are closer to $q$.
These ``hard negatives'' \cite{robinson2020contrastive,schroff2015facenet,oh2016deep,harwood2017smart,wu2017sampling,ge2018deep} yield larger and more informative gradients and should result in the biggest improvement in representation quality.
Thus, overloading notation for $k$, we compute the reward for a particular image as its representation's average cosine similarity to its $k$ closest neighbors in the target dataset. Given an image encoder $f_k: \mathbb{R}^{H\times W\times 3} \rightarrow \mathbb{R}^d$, an unlabeled target dataset $\mathcal D = \{ x_i\}_{i=1}^N$, and a new image $y$ to evaluate, the reward is calculated:
\begin{align}
    r(f_k, \mathcal D, y) = \max_{\substack{I \subset \{1, ..., N\}; \\ |I| = k}}\frac{1}{k} \sum_{i \in I} S_{\cos}(f_k(x_i), f_k(y))
\end{align}
where $S_{\cos}$ is the cosine similarity. A previous metric for identifying relevant data~\cite{jiang2021improving} used $k=1$ nearest neighbors, 
but we found that this was too noisy and allowed high rewards for outlier target images to distract our search.
We instead use $k=15$ to improve the accuracy of our relevance estimation.
In \cref{subsec:reward_analysis}, we compare our reward to alternatives and explore their failure modes. This reward is used for two purposes: determining which of the downloaded images to train on and, subsequently, which concepts would be useful to search for next.

\vspace{-0.06in}
\paragraph{Which images to train on.}
Many newly downloaded images are not worth training on, since they come from unrelated queries or are noisy results from the search engine.
Thus, at the end of each iteration, we rank the newly downloaded images by their reward and save the top $50\%$ to a replay buffer that we maintain across iterations. In subsequent iterations, we continue training on this filtered data.

\vspace{-0.06in}
\paragraph{Determining which concepts are useful.}
When we search for a concept and get back $Q$ image results $\{I_i\}_{i=1}^Q$, we take the average of the top 10 image-level rewards $r_i = r(f_k, \mathcal D, I_i)$ and use that as a \textit{concept-level score}. This gives us an accurate measure of the relevance of a particular query and reduces the impact of noisy search results.

\begin{algorithm}[t]
   \caption{$\texttt{Internet Explorer}$}
   \label{alg:internet_explorer}
\begin{algorithmic}[1]
    \STATE {\bfseries Input:} target dataset $\mathcal D$, SSL algorithm $\mathbb{A}$, search engine $\texttt{SE}$, encoder $f: \mathbb{R}^{H \times W \times 3} \rightarrow \mathbb{R}^d$, image reward function $r$, vocabulary $\mathcal V = \{c_i\}_{i=1}^C$, $\#$ concepts/itr $M$, $\#$ query results/search $Q$, 
    GPT-based concept $\rightarrow$ descriptor function $\texttt{GPTDesc}$, 
    concept distribution function $\texttt{CalcProb}$
    \STATE Initialize replay buffer $\mathcal{B} \leftarrow \emptyset$
    \STATE Initialize concept distribution $p = \text{Uniform}\{1, C\}$
    \FOR{iteration $=1, 2, \dots$}
        \FOR{$i = 1, \dots, M$}
            \STATE Sample concept $c_i \sim p(\mathcal{V})$ \hfill (\S\ref{subsec:text_query_generation})
            \STATE Sample descriptor $d_i \gets \texttt{GPTDesc}(c_i)$ \hfill (\S\ref{sec:gptj-descriptors})
            \STATE Image search $\{I_j^i\}_{j=1}^Q  \leftarrow \texttt{SE}(d_i + c_i, Q)$ \hfill (\S\ref{subsec:text_to_image_search})
            \STATE Calc.\ reward $r_{c_i} \gets \frac 1 Q \sum_{j=1}^Q r(f, \mathcal D, I_j^i)$ \hfill (\S\ref{subsec:image_rel_reward})
        \ENDFOR
        \STATE $\mathcal B_{\text{new}} = \{I_j^1\}_{j=1}^Q \cup \dots \cup \{I_j^M\}_{j=1}^Q$
        \STATE SSL training: $\mathbb{A}(f, \mathcal D \cup \mathcal B \cup \mathcal B_{\text{new}})$ \hfill (\S\ref{subsec:ssl})
        \STATE Add to buffer: $\mathcal{B} \leftarrow \mathcal{B} \cup \texttt{Top50\%}(\mathcal B_{\text{new}}, r)$  %
        \STATE Predict all concept rewards $\mathbf{r}_{\text{concept}}$ from $\{r_{c_i}\}$ \hfill (\S\ref{subsec:unseen_reward})
        \STATE Update concept dist $p \leftarrow \texttt{CalcProb}(\mathbf{r}_{\text{concept}})$ \hfill (\S\ref{subsec:tiering})
    \ENDFOR
\end{algorithmic}
\end{algorithm}

\subsection{Estimating Reward for Unseen Concepts}
\label{subsec:unseen_reward}
Since our vocabulary contains hundreds of thousands of concepts, it is inefficient to search to test whether a query yields relevant images. Luckily, we can estimate the quality of a query by using the observed rewards of the queries used so far. Humans can do this effortlessly due to our understanding of what each concept means. To us, it is obvious that if querying ``golden retriever'' yielded useful images for this dataset, then ``labrador retriever'' probably should as well. To give our method the same understanding of concept meaning, we embed our $146{,}347$ WordNet concepts into a 384-dimensional space using a pre-trained sentence similarity model \cite{reimers2019sentence}. We provide relevant context about concepts to the text embedding model using the following template:
\begin{quote}
\vspace{-0.08in}
{\tt {\small \{lemma\} (\{hypernym\}): \{definition\}}}.
\vspace{-0.08in}
\end{quote}
For example,
\begin{quote}
\vspace{-0.08in}
{\tt {\small Chihuahua (toy dog): an old breed of tiny short-haired dog with protruding eyes from Mexico held to antedate Aztec civilization.}}
\vspace{-0.08in}
\end{quote}

We use Gaussian process regression (GPR) \cite{williams1995gaussian} over the text embeddings $\{\mathbf{e}_i\}$ to predict the concept-level reward $r(\mathbf{e}_i)$ for untried concepts. 
GPR models the function outputs for any set of inputs $\{r(\mathbf{e}_i)\}$ as jointly Gaussian random variables. 
The covariance of any two variables $r(\mathbf{e}_i)$ and $r(\mathbf{e}_j)$ is determined by the kernel $k(\mathbf{e}_i, \mathbf{e}_j)$, which we set as the default RBF kernel $k(\mathbf{e}_i, \mathbf{e}_j) = \exp(\frac{-\|\mathbf{e}_i - \mathbf{e}_j\|_2}{2})$. 
Given the observed rewards for concepts $R_{obs} = \{r(\mathbf e_i)\}$, GPR calculates the posterior distribution over the rewards for an unobserved concept $\mathbf e'$, $P(r(\mathbf e') | \{r(\mathbf{e}_i)\} = R_{obs})$. Given that the joint distribution  $P(\{r(\mathbf{e}_i)\}, r(\mathbf{e}'))$ is Gaussian, the posterior is also Gaussian with mean $\mu(\mathbf e')$ and variance $\sigma(\mathbf e')^2$. The locality provided by the RBF kernel enables reasonable reward predictions, and having a distribution over rewards instead of a point estimate allows us to explore potentially good concepts. We encourage exploration by setting the score of unobserved concepts to $\mu(\mathbf{e}_i) + \sigma(\mathbf{e}_i)$.

\begin{figure}[t]
    \centering
    \includegraphics[width=0.9\linewidth]{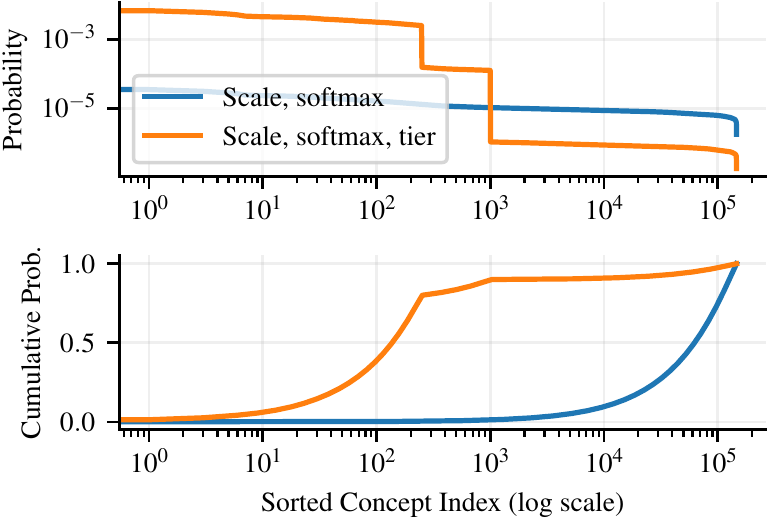}
    \vspace{-0.12in}
    \caption{\textbf{Learned concept sampling distribution.} Given estimated scores for each of the $146,347$ concepts, we need to choose how often to sample each one in order to balance exploration and exploitation. \textbf{Top:} we scale our scores to a desired temperature, then take the softmax to obtain a distribution over concepts. Finally, we create tiers so that the top 250 concepts have $80\%$ of the probability mass, and the next 750 have $10\%$. This ensures that we sample enough from the top $1{,}000$ concepts while still exploring other concepts with lower scores. \textbf{Bottom:} the top $1{,}000$ concepts are only sampled a tiny fraction of the time without tiering.}
    \label{fig:sampling_dist}
    \vspace{-0.12in}
\end{figure}

\begin{figure*}[t]
\centering
\includegraphics[width=0.9\linewidth]{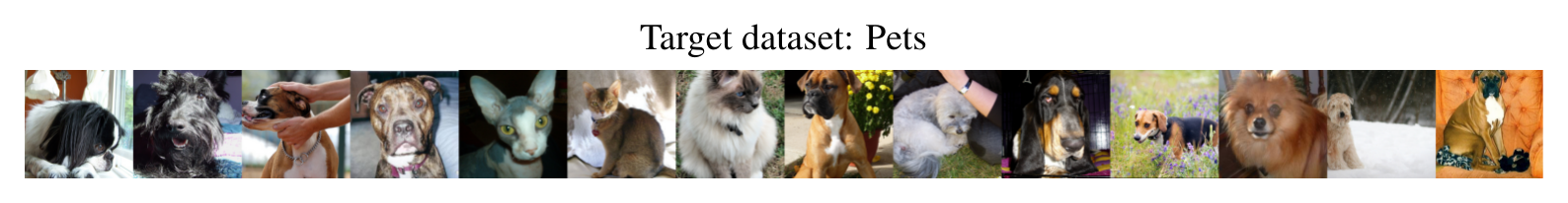}\\
\includegraphics[width=0.9\linewidth]{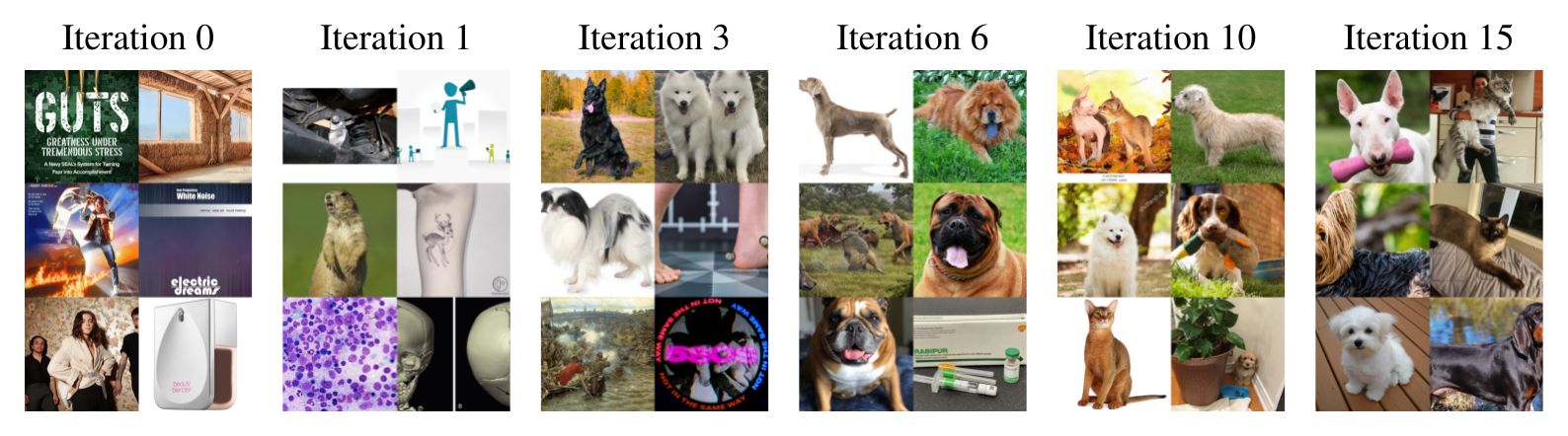}
\vspace{-0.15in}
\caption{
\textbf{Progression of downloaded images across training.} \textbf{Top:} samples of Oxford-IIIT Pets images. \textbf{Bottom:} samples of images queried by Internet Explorer across iterations. As it learns, it makes queries that are progressively more relevant to the target dataset.
}
\label{fig:progression}
\vspace{-0.05in}
\end{figure*}

\subsection{Provable speedup in relevant query identification}
\label{subsec:provable_speedup}
Only a small subset of our vocabulary of $n$ concepts is relevant to the target dataset. We assume that the relevant concepts are partitioned into $c$ disjoint clusters of size $s$, with $cs \ll n $. We want to discover every relevant concept by sampling concepts uniformly at random (with replacement) to test. We assume that sampling a concept conclusively tells us whether it is relevant. Furthermore, we assume that we could optionally use an algorithm (e.g., Gaussian process regression) that, if we have sampled a relevant concept, tells us that all concepts in its cluster are also relevant. Then, Lemma~\ref{lemma:speedup} shows that the Gaussian process drastically reduces the time required to identify all relevant concepts. 
\begin{restatable}{lemma}{lemmaspeedup}
    \label{lemma:speedup}
    Let $T_{base}$ be the expected time to identify every relevant concept without the GPR, and $T_{GPR}$ be the expected time when exploiting the additional knowledge from the GPR. Then, $T_{base} = n H_{c \cdot s}$, $T_{GPR} = \frac{nH_{c}}{s}$, and the speedup from GPR is $\frac{T_{base}}{T_{GPR}} \approx s \log s$.
\end{restatable}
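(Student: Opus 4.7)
The plan is to cast both quantities as coupon-collector problems and then form the ratio. The randomness over which samples reveal which concepts is clean enough that no concentration arguments are needed; everything reduces to expectations of geometric waiting times.

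First, for $T_{base}$ I would reason as follows. Because sampling is uniform with replacement over the full vocabulary of $n$ concepts and there are $cs$ relevant ones, after $i$ distinct relevant concepts have been identified the chance that the next draw yields a new relevant concept is $(cs-i)/n$. The waiting time for the next discovery is geometric with that success probability and therefore has expected value $n/(cs-i)$. Summing these waiting times for $i=0,1,\ldots,cs-1$ gives $n\sum_{j=1}^{cs} 1/j = n H_{cs}$, matching the claimed formula.

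For $T_{GPR}$ the key observation is that the GPR's clusterwise generalization collapses the task into a coarser coupon collector over the $c$ clusters: hitting any one of the $s$ members of an unexplored cluster instantly reveals the entire cluster. Conditional on $i$ clusters already identified, the probability that the next uniform draw lands in a fresh cluster is $(c-i)s/n$, giving an expected waiting time of $n/((c-i)s)$. Summing over $i = 0,\ldots,c-1$ then yields $(n/s)\sum_{j=1}^c 1/j = nH_c/s$, as stated. The speedup is then obtained by dividing: $T_{base}/T_{GPR} = s\,H_{cs}/H_c$ exactly. Substituting the standard approximation $H_k \approx \log k$ gives $s\log(cs)/\log c = s(1 + \log s/\log c)$, which is $\approx s\log s$ in the regime where $c$ is small relative to $s$ (so that $\log c$ is of constant order and $\log(cs)\approx \log s$).

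The main subtlety I anticipate is in the last step: the bound $s\log s$ is really a statement about a regime, since when $c$ is comparable to $s$ the ratio is only $\Theta(s)$. A careful write-up should flag this dependence on cluster structure rather than hide it inside an ``$\approx$''. The two coupon-collector formulas themselves are classical; the real content is framing the GPR-aided case as collection over clusters (rather than over individual concepts) and being explicit about the asymptotic regime under which the advertised $s\log s$ speedup holds.
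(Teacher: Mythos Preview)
Your proposal is correct and follows essentially the same approach as the paper: both cast $T_{base}$ and $T_{GPR}$ as coupon-collector problems (over individual relevant concepts and over clusters, respectively), sum the geometric waiting times to obtain $nH_{cs}$ and $nH_c/s$, and take the ratio. Your treatment of the $s\log s$ approximation is in fact more careful than the paper's, which simply asserts $sH_{cs}/H_c \approx s\log s$ without flagging the implicit assumption that $c$ is of constant order.
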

The proof is in Appendix~\ref{sec:proof}. For our vocabulary and target datasets, $s \approx 100$. This shows that a predictive model like GPR is crucial for quickly identifying all useful concepts. 

\subsection{Query sampling distribution}
\label{subsec:tiering}
Once we have estimates for the quality of each concept, how do we determine what to search for next?
We face the age-old dilemma of exploration versus exploitation:
we need to sample the top concepts frequently enough to get relevant training data for SSL, while at the same time, we need sufficient exploration of promising untried concepts.

We use a sampling-based approach based on Boltzmann exploration \cite{sutton1991dyna}. Boltzmann exploration samples based on a scaled softmax distribution $ p(c_i) \propto \exp(r(c_i)/\tau)$, where  
$\tau$ is the temperature scaling.
However, with a large vocabulary (action space) of $146,347$ concepts, it becomes difficult to tune $\tau$ so that we sample the top concepts frequently enough without being too skewed. 
Thus, we define a ``tiering function'' to adjust the probability mass in specified intervals of our distribution. Given a sorted discrete probability distribution $p$, interval boundaries $T_0 =0 < T_1 < \dots < T_n$, and interval masses $\Delta_0, \dots, \Delta_{n-1}$ such that $\sum_i \Delta_i = 1$,  tiering computes a new distribution: 
\begin{align}
    p_i^{\text{tier}} = \Delta_j \frac{p_i}{\sum_{k=T_j}^{T_{j+1}} p_k} \;\;\; \text{for } j \text{  s.t.  }T_j \leq i < T_{j+1} 
\end{align}
$p^{\text{tier}}$ is a new distribution such that $\sum_{k=T_j}^{T_{j+1}} p^{\text{tier}} = \Delta_j$. We use $T_0=0$, $T_1=250$, $T_2=1{,}000$, $T_3=146{,}347$, $\Delta_0=0.8$, $\Delta_1 = 0.1$, and $\Delta_2=0.1$.
Simply put: we give the highest-ranked $250$ concepts $80\%$ of the probability mass, the next $750$ concepts $10\%$, and all remaining concepts $10\%$.
Figure~\ref{fig:sampling_dist} shows that tiering the scaled softmax distribution samples frequently enough from the top concepts while a vanilla scaled softmax distribution does not. 

\section{Experimental Setting}
\subsection{Self-supervised Exploration}
We assume that we have an unlabeled target dataset of images for which we would like to learn useful visual features. We compare three methods:
\begin{enumerate}[noitemsep,topsep=0pt]
    \item Random: sample concepts uniformly from the vocab. 
    \item Ours: sample concepts from our learned distribution. 
    \item Ours++: additionally use GPT-generated descriptors.
\end{enumerate}

\begin{figure*}[t]
    \centering
    \includegraphics[width=\linewidth]{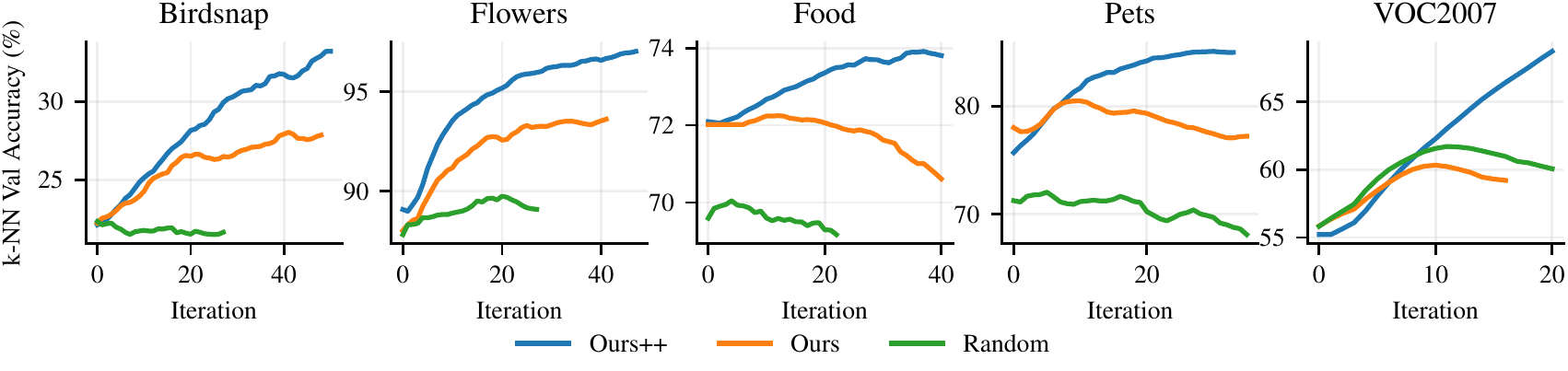}
    \vspace{-2.4em}
    \caption{\textbf{Learning curves in self-supervised setting.} We show how $k$-NN validation accuracy improves across iterations on each target dataset. Without using any labels, Internet Explorer identifies and focuses on relevant concepts for each target dataset. This allows it to find more useful data than the baseline that searches for random concepts. Adding GPT-generated descriptors (Ours++) further improves performance by enabling Internet Explorer to generate diverse views of useful concepts. 
    } 
    \label{fig:learning_curves}
    \vspace{-0.15in}
\end{figure*}

\subsection{Label Set-guided Exploration}
We may sometimes know the set of labels for our task (e.g., ``golden retriever,'' etc.) even if we do not have image-label pairs. 
Knowing the label set greatly accelerates learning on the Internet, because it acts as a strong prior on what could be useful. 
Using our text similarity model, we reduce the size of the vocabulary by selecting the top 
$10\%$ ($14{,}635$ concepts)
with the largest average top-$k$ similarity to the label set in text embedding space. We set $k$ to a third of the size of the label set to reduce the impact of outliers. Reducing the size of the vocabulary strengthens our baselines by ensuring that they only search for potentially useful concepts. We compare 4 methods:   
\begin{enumerate}[noitemsep,topsep=0pt]
    \item Labels: only search for labels. 
    \item Labels + relevant: search for labels
    half of the time, and random concepts from the pruned vocabulary the other half of the time. 
    \item Ours: sample labels half of the time and sample from our learned concept distribution the other half. 
    \item Ours++: additionally use GPT-generated descriptors.
\end{enumerate}
We call this setting ``label set-guided,'' since we have additional supervision in the form of the label set.

\subsection{Datasets and Metrics}
We evaluate Internet Explorer on 4 popular small-scale fine-grained classification datasets: Birdsnap \cite{berg2014birdsnap}, Flowers-102 \cite{nilsback2008automated}, Food101 \cite{bossard2014food}, and Oxford-IIT Pets \cite{parkhi2012cats}.
These small datasets consist of $2{,}040$ to $75{,}750$ training examples, making them ideal for testing whether Internet Explorer can efficiently find relevant useful data.
We also evaluate on PASCAL VOC 2007 (Cls)~\cite{everingham2010pascal}, a coarse-grained multi-label classification task,
and ImageNet-100~\cite{tian2020contrastive}.
Finally, we try FMoW~\cite{fmow2018}, a satellite domain classification task.
We compare the representation quality of our model \wrt its target dataset using two metrics: $k$-nearest neighbors ($k$-NN) accuracy and linear probe accuracy.

\begin{table*}[t]
    \centering
    \begin{adjustbox}{width=1\textwidth}
    \begin{tabular}{lc@{\hskip 0.12em}cc@{\hskip 0.12em}cc@{\hskip 0.12em}cc@{\hskip 0.12em}cc@{\hskip 0.12em}cc@{\hskip 0.12em}cc@{\hskip 0.12em}cc@{\hskip 0.12em}cc}
    \toprule
        Model & \multicolumn{2}{l}{Birdsnap} & \multicolumn{2}{l}{Flowers} & \multicolumn{2}{l}{Food}  & \multicolumn{2}{l}{Pets} & \multicolumn{2}{l}{VOC2007} & \multicolumn{2}{l}{IN100} & \multicolumn{2}{l}{FMoW${}^\star$} &  Images & GPU hrs. \\
    \midrule
    \textit{Fixed dataset, lang. supervision} \\
        \;\;\;CLIP ResNet-50 (\textbf{oracle})  & $57.1$ & & $96.0$ & & $\bf{86.4}$ & & $88.4$ & & $\bf{86.7}$ & & $\bf{89.3}$ & & $44.9$ & & $400 \times 10^6$ & $4{,}000$ \\ %
    \midrule
    \textit{Fixed dataset, self-supervised} \\
        \;\;\;MoCo-v3 (ImageNet pre-train)  & $26.8$ & & $83.2$ & & $70.5$ & & $79.6$ & & $-$ & & $-$ & & $40.8$ & & $1.2 \times 10^6$ & $72$ \\
        \;\;\;MoCo-v3 (ImageNet + target)  & $39.9$ & & $94.6$ & & $78.3$ & & $85.3$ & & $58.0^\dag$ & & $84.7^\dag$ & & $52.5$ & & $1.2 \times 10^6$ & $72 + 12$ \\
    \midrule
    \textit{No label set information} \\
        \;\;\;Random exploration  & $39.6$ & \red{$(-0.3)$} & $95.3$ & \green{$(+0.7)$} & $77.0$ & \red{$(-1.3)$} &  $85.6$ & \green{$(+0.3)$} & $70.2$ & \green{$(+12.2)$} & $85.7$ &  \green{$(+1.0)$} & $54.3$ & \green{$(+1.8)$} &  $2.2 \times 10^6$ & $84 + 40$ \\
        \;\;\;Ours  & $43.4$ & \green{$(+3.5)$} & $97.1$ & \green{$(+2.5)$} & $80.5$ & \green{$(+2.2)$} & $86.8$ & \green{$(+1.5)$} & $68.5$ & \green{$(+10.5)$}  & $86.2$ &  \green{$(+1.5)$} & $-$ & $-$ & $2.2 \times 10^6$ & $84 + 40$ \\
        \;\;\;Ours++  & $54.4$ & \green{$(+14.5)$} & $98.4$ & \green{$(+3.8)$} & $82.2$ & \green{$(+3.9)$} & $89.6$ & \green{$(+4.3)$} & ${80.1}$ & \green{${(+22.1)}$} & $86.4$ &  \green{$(+1.7)$} & $54.1$ & \green{$(+1.6)$} & $2.2 \times 10^6$ & $84 + 40$ \\
    \midrule 
    \textit{Use label set information} \\       
        \;\;\;Search labels only  & $47.1$ & \green{$(+7.2)$} & $96.3$ & \green{$(+1.7)$} & $80.9$ & \green{$(+2.6)$} & $85.7$ & \green{$(+0.4)$} & $61.8$ & \green{$(+3.8)$} & $85.7$ &  \green{$(+1.0)$} & $53.5$ & \green{$(+1.0)$} & $2.2 \times 10^6$ & $84 + 40$ \\
        \;\;\;Labels + relevant terms  & $49.9$ & \green{$(+10.0)$}& $98.0$ & \green{$(+3.4)$} & $81.2$ & \green{$(+2.9)$} & $87.0$ & \green{$(+1.7)$} & $67.5$ & \green{$(+9.5)$} & $86.3$ &  \green{$(+1.6)$} & $54.1$ & \green{$(+1.6)$} & $2.2 \times 10^6$ & $84 + 40$ \\
        \;\;\;Ours  & $52.0$ & \green{$(+12.1)$} & $97.6$ & \green{$(+3.0)$} & $81.2$ & \green{$(+2.9)$} & $87.3$ & \green{$(+2.0)$} & $70.3$ & \green{$(+14.3)$} & 86.4 &  \green{$(+1.7)$} & -- & -- & $2.2 \times 10^6$ & $84 + 40$ \\
        \;\;\;Ours++  & $\mathbf{62.8}$ & \green{$\mathbf{(+22.9)}$} & $\bf{99.1}$ & \green{$\mathbf{(+4.5)}$} & $84.6$ & \green{$(+6.3)$} & $\mathbf{90.8}$ & \green{$\mathbf{(+5.5)}$} & ${79.6}$ & \green{$(+21.6)$} & $87.1$ &  \green{$(+2.4)$} & $\mathbf{54.5}$ & \green{$(+\mathbf{2.0})$} & $2.2 \times 10^6$ & $84 + 40$ \\
    \bottomrule
    \end{tabular}
    \end{adjustbox}
    \caption{\textbf{Linear probing accuracy}. Our method significantly improves the starting checkpoint performance in just 40 additional hours of training. We show the performance change from the starting MoCo-v3 (ImageNet + target) initialization in \green{green}/\red{red}. CLIP numbers correspond to linear probe (which is higher than its zero-shot accuracy). Internet Explorer reaches or often surpasses CLIP (oracle with 2x params) performance on each dataset while using 2.5\% as much compute and 0.5\% as much data.
    ${}^{\dag}$For VOC2007 and IN100, we do not do ImageNet pre-training because ImageNet is too similar and obscures the effect.
    ${}^\star$For FMoW-WILDS, we use a hand-crafted list of domain-specific descriptors common to all models
    (see
    \cref{sec:fmow_wilds_details}
    for more details).
    }
    \label{tab:main_results}
    \vspace{-0.15in}
\end{table*}

\begin{figure}[t]
\centering
\includegraphics[width=\linewidth]{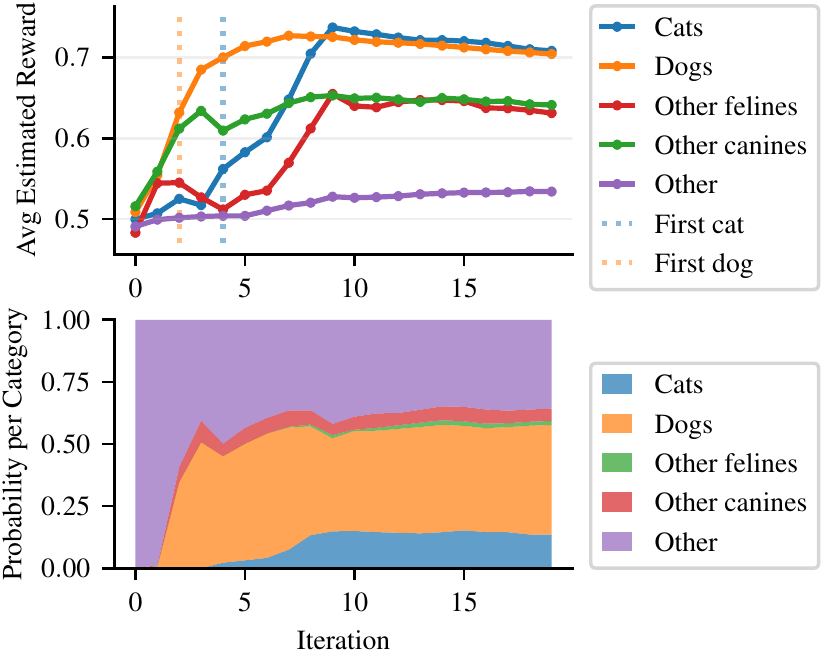}
\vspace{-0.3in}
\caption{\textbf{Self-supervised concept discovery on Pets dataset.} When targeting the Pets dataset, self-supervised Internet Explorer quickly estimates high reward for concepts from the cat category (82 concepts) and dog category (246 concepts). It is also able to identify felines that are not cats (e.g., tiger) and canines that are not dogs (e.g., wolf), although it gives them lower reward on average. Finding these categories is especially challenging since they comprise only $460/146{,}347 = 0.3\%$ of the vocabulary.}
\vspace{-0.22in}
\label{fig:reward_over_training}
\end{figure}

\section{Results and Analysis}
\subsection{Self-supervised Results}
\cref{fig:learning_curves} shows how Internet Explorer improves the $k$-NN accuracy more efficiently than sampling queries uniformly at random from the concept vocabulary. In fact, random sampling occasionally decreases accuracy, likely due to the fact that Internet images can generally be unsuitable for pre-training due to issues such as watermarks, images containing text, and overly photogenic images \cite{mezuman2012learning,chen2015webly}. 
\cref{tab:main_results} shows that our method significantly improves on the starting MoCo-v3 (ImageNet + target) checkpoint and can outperform a CLIP \cite{radford2021learning} model of the same size while using much less compute and data. This is impressive as CLIP can be considered an oracle since its training set contains up to 20k Bing image search results for each WordNet lemma (in addition to other queries).
Using GPT-generated descriptors in ``Ours++'' also significantly improves performance by enabling Internet Explorer to generate diverse views of the most useful concepts. 

\subsection{Self-supervised Exploration Behavior}
\cref{fig:reward_over_training} shows the progression of Internet Explorer (Ours++) behavior on the Pets dataset in the self-supervised setting. Since Pets consists of cat and dog breeds, to analyze the results, we use the WordNet hierarchy to divide concepts in our vocabulary into 5 meaningful categories: cats, dogs, non-cat felines (e.g., lion), non-dog canines (e.g., wolf), and other. This categorization is only done for this post hoc analysis and is not provided during training. Figure~\ref{fig:reward_over_training} (top) shows that
Internet Explorer rapidly identifies the roughly $0.3\%$ of concepts that are useful for Pets. During the first two iterations, the average estimated reward for each category is roughly the same. However, after the first dog concept is searched in iteration $\#2$, the estimated reward and probability mass for dogs and other canines rapidly increases. The same happens for cats after the first cat is searched in iteration $\#4$. Interestingly, while ``other felines'' and ``other canines''  have higher average reward than the ``other'' category, they still have much lower reward than cats and dogs. This indicates that our model understands that other felines and canines (mostly large, wild predators) are only moderately relevant for house pet cats and dogs. 

Figure~\ref{fig:progression} shows how Internet Explorer downloads progressively more useful images over time. It shows 8 random images that were downloaded in iteration $\#0$, $\#1$, $\#3$, $\#6$, $\#10$, and $\#15$ in the self-supervised setting. Iteration $\#0$ contains mostly useless data, like graphics or screenshots, but Pets-relevant images already make up most of the downloads by iteration $\#3$. \cref{sec:progression_downloaded_imgs} shows that Internet Explorer identifies useful images shockingly quickly across every dataset, without any knowledge of their label sets.

\begin{figure*}[t]
    \centering
    \includegraphics[width=\linewidth]{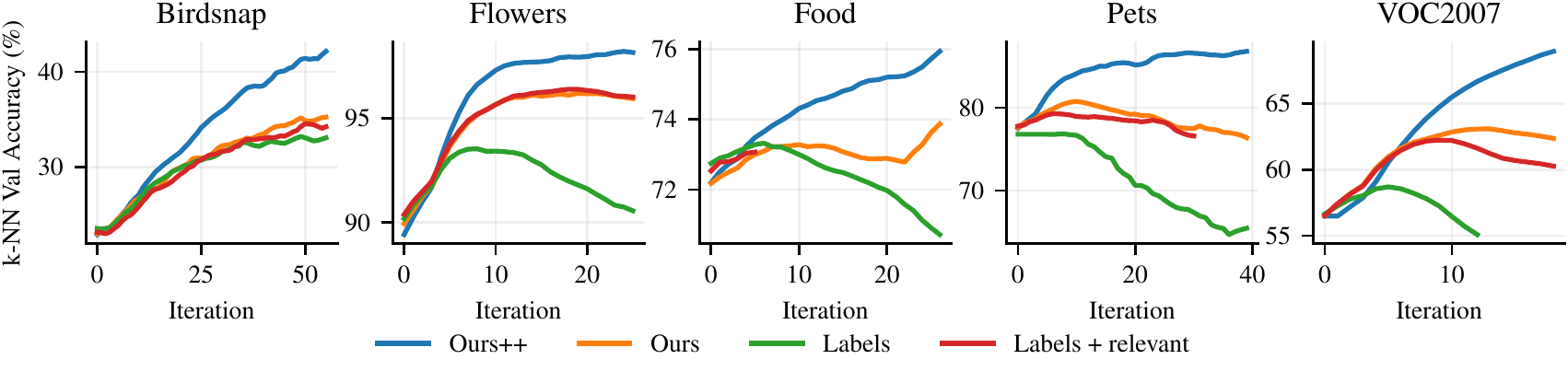}
    \vspace{-0.3in}
    \caption{\textbf{Learning curves in label set-guided setting.} Using knowledge of the label set improves the performance of all methods. }
    \label{fig:semisup_learning_curves}
    \vspace{-0.1in}
\end{figure*}

\subsection{Label Set-guided Results}
Internet Explorer significantly outperforms the stronger baselines in the label set-guided setting where we additionally have knowledge of the label set. Searching for the label set continuously provides useful data and helps us rapidly identify other useful concepts. Together with the diversity promoted by GPT descriptors, Ours++ outperforms CLIP in 4/7 datasets and approaches its performance in the other 3, using just 2.5\% of the time and 0.5\% the data.

\subsection{Domain dataset results}
\label{sec:domain_dataset_results}
To test if Internet Explorer is effective when the target dataset contains very specific domain knowledge, we apply it to
FMoW-WILDS~\cite{fmow2018}---a popular satellite imaging domain dataset---by hand-designing a dozen search prompts that help induce satellite image results (details in \cref{sec:fmow_wilds_details}).
Even though the WordNet vocabulary is not particularly suited for this dataset, Internet Explorer still improves the LP accuracy by $2$ percentage points (see \cref{tab:main_results}).
Notably, all of our methods dramatically outperform CLIP here, likely because the distribution of satellite data is very different than the data used to train CLIP.
This demonstrates the wide flexibility of our method to be applied to arbitrary domains.

\subsection{Learning from other sources of data}
\label{subsec:search_engine_main}
\begin{table*}[t]
    \centering
    \begin{adjustbox}{width=0.85\textwidth}
    \begin{tabular}{@{\extracolsep{4pt}}lccccccccc}
    \toprule
        \multirow{2}{*}{\textbf{Model}}
        &\multicolumn{3}{c}{\textbf{Flowers}} 
        &\multicolumn{3}{c}{\textbf{Food}}
        &\multicolumn{3}{c}{\textbf{Pets}} \\
        \cmidrule{2-4} \cmidrule{5-7} \cmidrule{8-10}

        & Google & Flickr & LAION & Google & Flickr & LAION & Google & Flickr & LAION \\
    \midrule
    \textit{Fixed dataset} &&&&&&&&&\\    
        \;\;\; MoCo-v3 (IN)                          & $83.2$ & $83.2$ & $83.2$ & $70.5$ & $70.5$ & $70.5$ & $79.6$ & $79.6$ & $79.6$ \\
        \;\;\; MoCo-v3 (IN + target)                 & $94.6$ & $94.6$ & $94.6$ & $78.3$ & $78.3$ & $78.3$ & $85.3$ & $85.3$ & $85.3$ \\
    \midrule
    \textit{Undirected search} &&&&&&&&&\\    
        \;\;\;Random exploration                     &  $95.3$  &  $95.2$  &  $94.8$  &  $77.0$ &  $80.0$  &  $80.2$  &  $85.6$ & $84.4$  & $85.1$ \\
    \midrule 
    \textit{Internet Explorer} &&&&&&&&&\\    
        \;\;\;Ours++ (no label set)                  &  $98.4$  &  $98.1$  &  $94.6$  &  $81.2$  &  $80.3$  &  $80.9$  &  $87.3$  &  $88.4$  &  $85.9$  \\
        \;\;\;Ours++ (with label set)                &  $\bf{99.1}$ &  $\bf{99.0}$ &  $\bf{95.8}$ & $\bf{84.6}$ & $\bf{81.9}$  &  $\bf{81.0}$  & $\bf{90.8}$ &  $\bf{89.1}$  &  $\bf{86.7}$  \\
    \bottomrule
    \end{tabular}
    \end{adjustbox}
    \vspace{-0.06in}
    \caption{\textbf{Linear probe accuracy with other search engines}. Internet Explorer improves its performance using any search engine, including Flickr and our custom text-based LAION search engine.}
    \label{tab:search_engine}
    \vspace{-0.05in}
\end{table*}

We primarily obtain images by querying Google Images, but Internet Explorer is compatible with any text-to-image search engine. To measure the effect of the choice of search engine, we also test Internet Explorer with the Flickr photo search API and a custom search engine we built on top of a subset of LAION-5B~\cite{schuhmann2022laion}. LAION-5B consists of noisy web-scraped (text, image) pairs, and our custom LAION search engine searches using approximate nearest neighbors in \textit{text embedding space}. Thus, it tests whether Internet Explorer can still improve even when the search engine has little inductive bias. We discuss more details in \cref{sec:search_engine_details}. \cref{tab:search_engine} shows that Internet Explorer consistently improves over time, regardless of the search engine we use. Google consistently does best, followed by Flickr, then LAION (which has the smallest pool of images to draw from). Using Internet Explorer to search LAION-5B consistently performs \textit{better} than random exploration---indicating that Internet Explorer is effective even for selecting data from a static dataset.

\subsection{Effect of image reward type}
\label{subsec:reward_analysis}

We run an ablation on the type of image relevance reward. Instead of calculating the image reward based on the average similarity to the $k=15$ nearest neighbors in representation space (as in \cref{subsec:ssl}), we also try using $k=1$
\begin{wraptable}{r}{0.5\linewidth}
    \vspace{-0.05in}
    \centering
    \begin{adjustbox}{width=0.9\linewidth}
        \begin{tabular}{lcc}
        \toprule
        Reward Type & Food \\
        \midrule
        MoCo loss & 81.2 \\
        1-NN sim  & 83.2 \\
        15-NN sim (ours) & \textbf{84.6} \\
        \bottomrule
    \end{tabular}
    \end{adjustbox}
    \vspace{-0.1in}
    \caption{\textbf{Ablation on type of image reward.}
    MoCo loss does not identify relevant concepts, and 1-NN is sensitive to outlier images. }
    \label{tab:image_reward}
    \vspace{-0.15in}
\end{wraptable}
or the MoCo contrastive loss as the reward. \cref{tab:image_reward} compares these three metrics in the label set-guided setting and shows that $k=15$ does best. We explain this result by qualitatively comparing the behavior of various metrics on Food101 in \cref{fig:reward_ranking}.
The MoCo loss does not identify relevant concepts, instead preferring images that are difficult to align across augmentations.
Representation similarity with $k=1$ also fails, as it prefers images of \texttt{zebras} and \texttt{books} because they are highly similar to a few outlier images in Food101. Our proposed reward with $k=15$ eliminates the influence of outliers and avoids this problem.

\begin{figure}[t]
    \centering
    \includegraphics[width=0.9\linewidth]{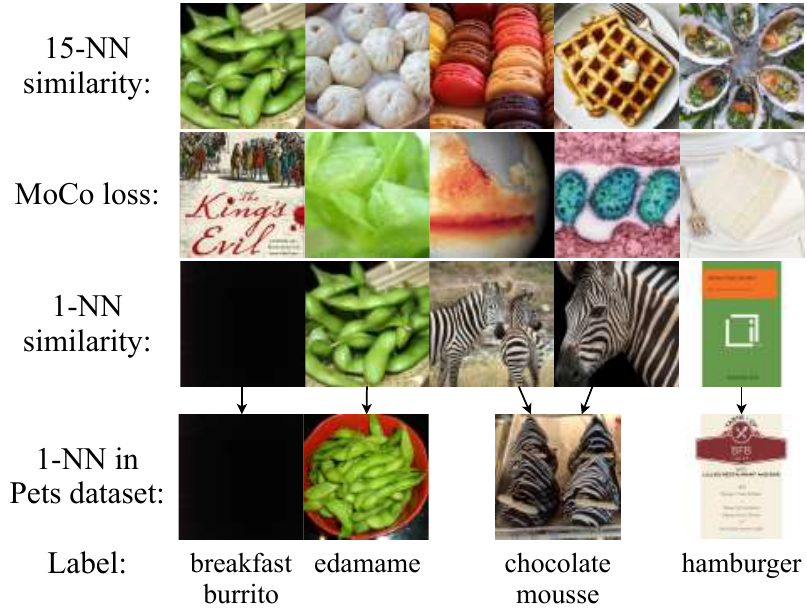}
    \vspace{-0.1in}
    \caption{\textbf{Most preferable images under different rewards.} We show the top 5 downloaded images ranked by 3 possible image rewards 
    for adversarial Food101 examples.
    MoCo loss encourages noisy out-of-distribution images; 15-NN (ours) prefers a wide variety of food images, whereas outliers in the Food dataset throw off 1-NN, causing it to reward black images, text, and zebras.}
    \label{fig:reward_ranking}
    \vspace{-0.15in}
\end{figure}

\subsection{Comparison to image-to-image search}
An alternate approach to finding relevant Internet data is to use image-to-image search: for each image in the target dataset, directly retrieve images that are visually similar. 

\vspace{-0.1in}
\paragraph{Scientific and practical issues}
Image-to-image search uses strong visual representations from pretrained models in order to identify similar images. This defeats the primary purpose of Internet Explorer: learning useful representations when none exist beforehand (e.g., a new iPhone is released that is out-of-distribution for existing vision models). Text-based search avoids this issue by using additional supervision (e.g., caption and surrounding text) that makes it easier to index new images. Image-to-image search also relies on paid APIs that can cost thousands of dollars. 

\vspace{-0.1in}
\paragraph{Comparison to text-based search}
Regardless of the concerns above, we do a controlled comparison between Internet Explorer and image-based search over LAION-5B. For each image in a target training set, we compute its CLIP ViT-L/14 representation and find its $N$ nearest neighbors in LAION-5B. We choose $N$ so that we download a total of 1 million new images, which matches how many images Internet Explorer downloads. We then train a MoCo-v3 model on a 1:1 mix of the target dataset and the downloaded images with the exact same hyperparameters (e.g., learning rate, number of steps, etc) as Internet Explorer. Interestingly, Table~\ref{tab:image-to-image} shows that the image-to-image approach consistently learns worse features than Internet Explorer, despite taking advantage of strong, pretrained vision features from CLIP. We hypothesize that image-to-image search finds images that are too similar to the target images, resulting in less additional information that was not already present in the target dataset. In contrast, using text (concepts and descriptors) as an intermediate bottleneck encourages Internet Explorer to download novel images that generalize along useful axes.

\section{Related Work}

Many papers use self-supervised or weakly-supervised learning on large-scale static datasets collected from the Internet, such as YFCC-100M \cite{thomee2015yfcc100m}, Instagram-1B~\cite{mahajan2018exploring}, or LAION-5B \cite{schuhmann2022laion}. However, these are usually impractically expensive since they train on all of the data, not just the subset relevant to a target dataset. 
Concurrent work~\cite{oquab2023dinov2} attempts to address this by adding a ``one-time'' automatic data curation step that keeps only the most relevant images from a static web crawl dataset. This approach works well but is limited as the selection process does not use the most up-to-date learned features or adjust its searches on-the-fly to focus on especially useful data. 

Other approaches obtain additional training data by searching for predetermined queries. \citet{fergus2005learning} create a supervised training dataset from the Google image search results for a list of known classes. \citet{kamath2022webly} improve a visual question-answering model using a set of predetermined Bing queries. However, these approaches query the internet just once, which is susceptible to noise in the search results, and the total amount of data is limited to the relevant search terms known \textit{a priori}. Internet Explorer's self-supervised approach bypasses these problems. It can learn useful features from noisy yet relevant data, and it only needs an initial image collection to identify relevant search queries. This enables it to \textit{continually} explore the Internet via a potentially unbounded number of searches. 

\begin{table}[t]
    \vspace{-0.12in}
    \centering
        \begin{adjustbox}{width=0.85\linewidth}
        \begin{tabular}{lccc}
        \toprule
            & Flowers & Pets & VOC2007 \\
        \midrule
        Image-to-image & 96.6 & 81.6 & 67.8 \\
        Internet Explorer (ours) & \textbf{98.8} & \textbf{87.0} & \textbf{76.1} \\
    \bottomrule
\end{tabular}
\vspace{-0.5in}
\end{adjustbox}
    \caption{
        \textbf{$k$-NN accuracy across search methods}. Image-to-image search uses CLIP ViT-L/14 vision features to acquire the nearest neighbors of each target dataset image. Despite using strong pretrained features and the same source data (LAION-5B), number of downloaded images, and other hyperparameters as Internet Explorer, the image-to-image approach learns worse features. 
    }
    \label{tab:image-to-image}
    \vspace{-0.22in}
\end{table}

Finally, some approaches continuously interact with the Internet to find useful data.
NELL \cite{carlson2010toward,mitchell2018never} extracts text from web pages to form beliefs, and NEIL \cite{chen2013neil} uses images downloaded from Google Image Search to learn visual concepts.
However, both methods are undirected (\ie, they do not modify their exploration behavior to prioritize specific data), which means that learning is slow and will not necessarily improve performance on a desired task. In contrast, Internet Explorer continually uses \textit{targeted} exploration on the Internet to find data for self-supervised training.

\section{Conclusion}
We show that interactively exploring the Internet is an efficient source of highly relevant training data---if one knows how to search for it. In just 30--40 hours of training on a single GPU, Internet Explorer significantly outperforms or closely matches the performance of compute-heavy \textit{oracle} models like CLIP trained on static datasets, as well as strong baselines that search the Internet in an undirected manner.

\noindent \textbf{Acknowledgements} We thank Russell Mendonca for helpful discussions and Shivam Duggal, Mihir Prabhudesai, Sheng-Yu Wang, Jason Y. Zhang, and Rishi Veerapaneni for paper feedback. AL is supported by the NSF GRFP, grants DGE1745016 and DGE2140739. This work is supported by NSF IIS-2024594 and ONR MURI N00014-22-1-2773.

\bibliography{main}
\bibliographystyle{icml2023}

\newpage
\appendix
\onecolumn
\section*{Appendix}
\section{Learning from other sources of data}
\label{sec:search_engine_details}
\begin{wrapfigure}{R}{0.375\textwidth}
\centering
    \vspace{-1.5em}
    \includegraphics[width=\linewidth]{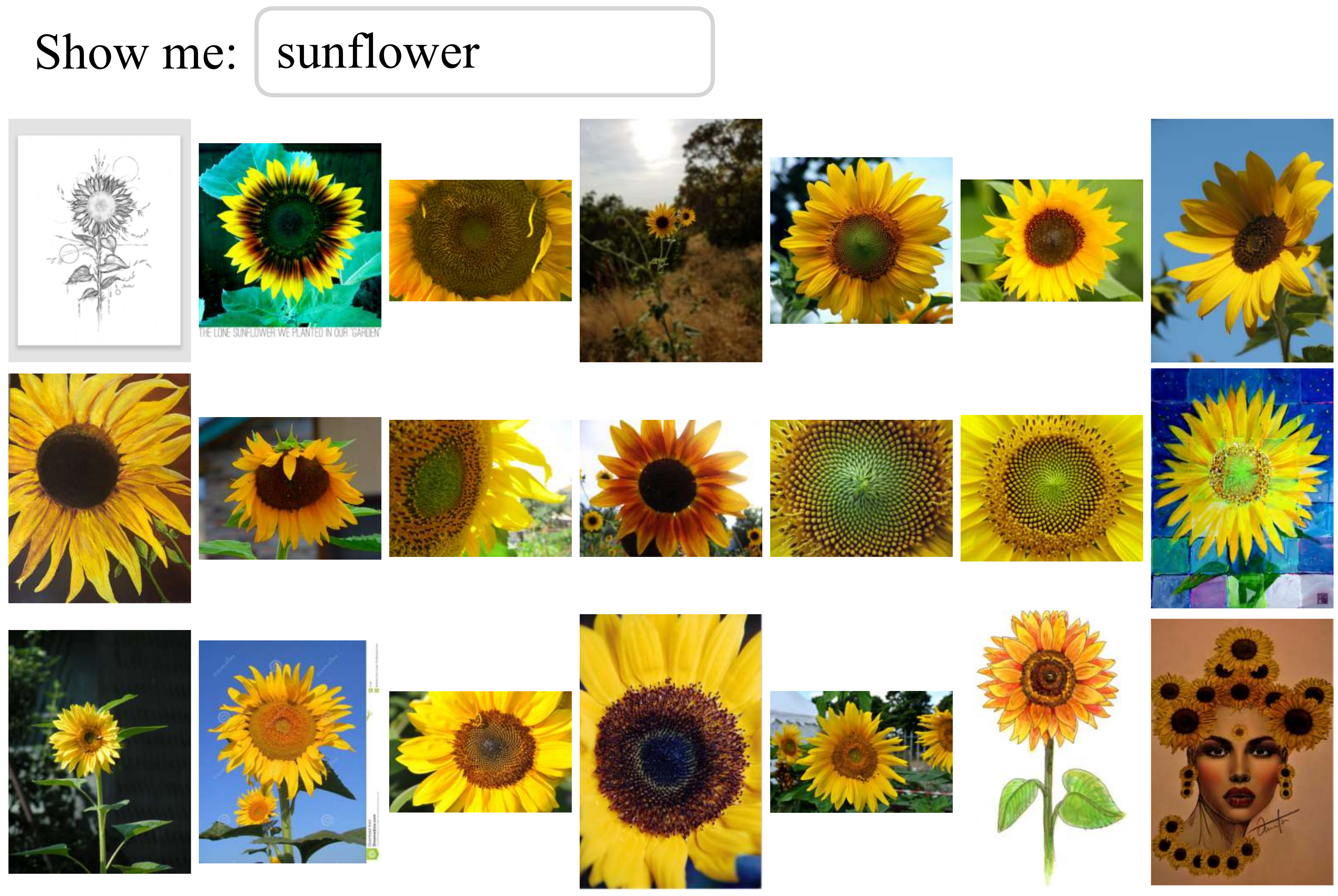}
    \vspace{-1em}
    \caption{\textbf{Our custom LAION-5B search engine.} We build a custom text-to-image search engine that finds images within the LAION-5B dataset by doing nearest neighbor search in text embedding space. This uses no image features whatsoever.}
    \label{fig:laion_search_engine}
    \vspace{-0.75em}
\end{wrapfigure}

\begin{figure*}[b]
    \centering
    \includegraphics{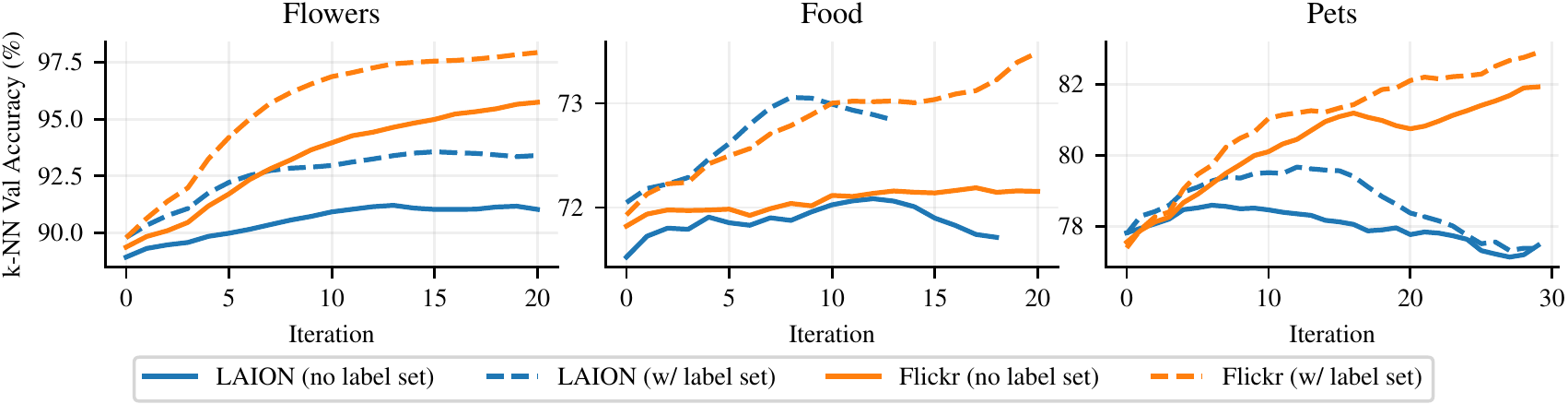}
    \caption{\textbf{Learning from Flickr and LAION-5B.} Even with the noisy search results returned by Flickr and LAION, Internet Explorer still continuously improves performance. }
    \label{fig:other_data_curves}
\end{figure*}

Google Images is an exceptionally useful data source for Internet Explorer. It offers access to a large portion of the Internet's images, and it ranks images using weak supervision from the image caption, surrounding text, click rates, image features, incoming and outgoing hyperlinks, and other signals. This extra supervision is helpful and should be utilized. Nonetheless, we show that Internet Explorer is agnostic to the choice of text-to-image search engine and can still rapidly improve even when the data source is much noisier. 

To test Internet Explorer in the most minimal setting, we build a custom search engine that finds images solely using their accompanying text---without using any pre-trained visual features whatsoever. We use the LAION-5B dataset~\cite{schuhmann2022laion}, which consists of 
\textgreater 5B
noisy image-caption pairs. We filter the dataset to only include 
images of at least $512^2$ pixels
with English captions.
This leaves us with about 600M text-image pairs. To find image results for a query, we find the 100 captions closest to the query in text representation space, then return the associated images.
We use a pre-trained text embedding model~\cite{reimers2019sentence} to compute 384-dimensional text embeddings for each caption. Then, we use Faiss~\cite{johnson2019billion} to compute a fast, approximate nearest-neighbors lookup index. Querying our custom search engine finds 100 image results in less than a second. \cref{fig:laion_search_engine} shows that our search engine is reasonably accurate, even without using any image features. 

We also test Flickr's photo search API as another text-to-image search engine, in addition to Google Images and LAION. \cref{fig:data_comparison} shows that each data source has its own tendencies. For the ``spaghetti bolognese'' query, Google Images is biased~\cite{mezuman2012learning,chen2015webly} towards brightly-lit, photogenic images that typically come from food blogs. Flickr mainly consists of amateur home photos, so it returns a messier variety of images that perhaps better capture the real world. LAION images come from web crawling, without any ranking, so they additionally contain many graphics with text overlays. The same image can also frequently show up in the LAION results multiple times, as a result of being posted on multiple separate pages.

\begin{figure*}
    \centering
    \includegraphics{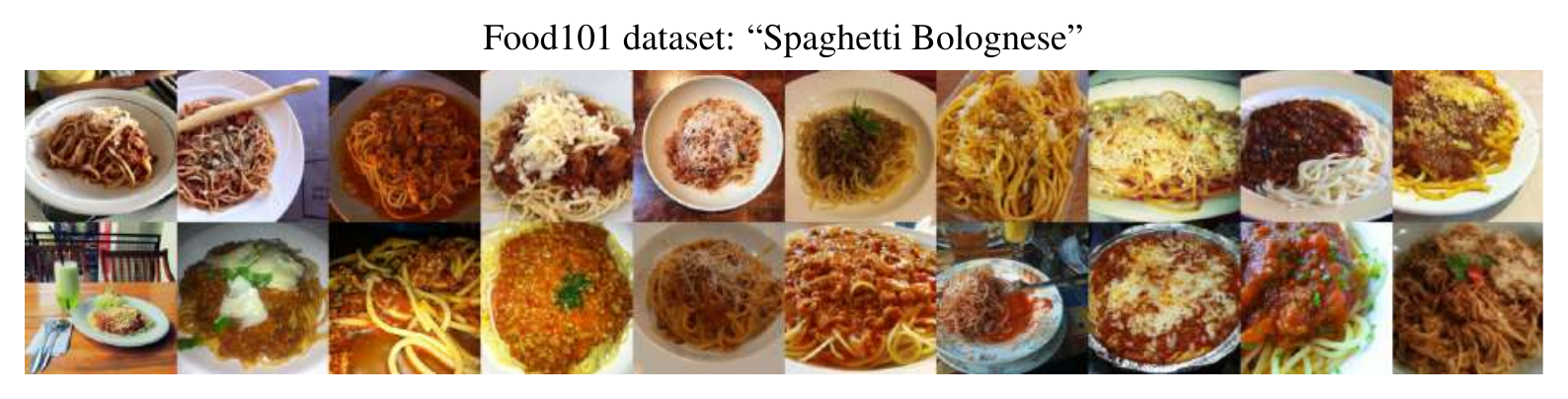} \\
    \includegraphics{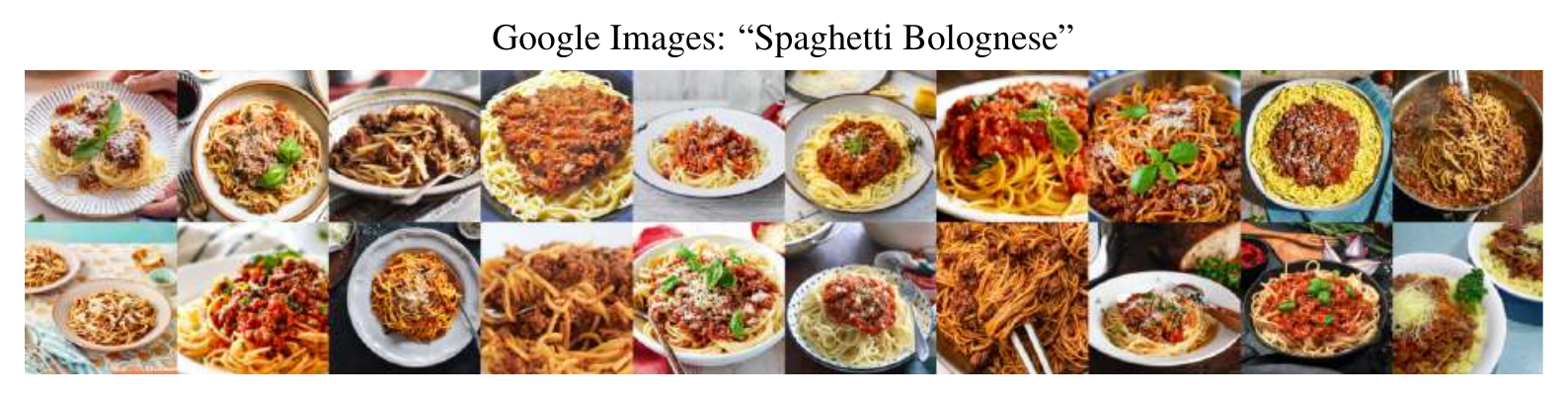} \\
    \includegraphics{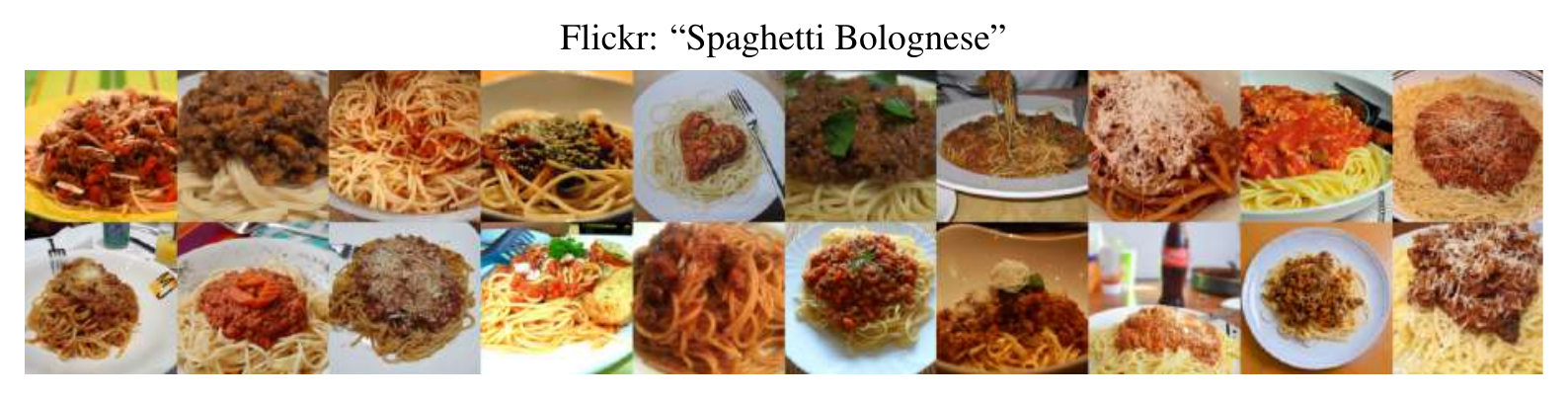} \\
    \includegraphics{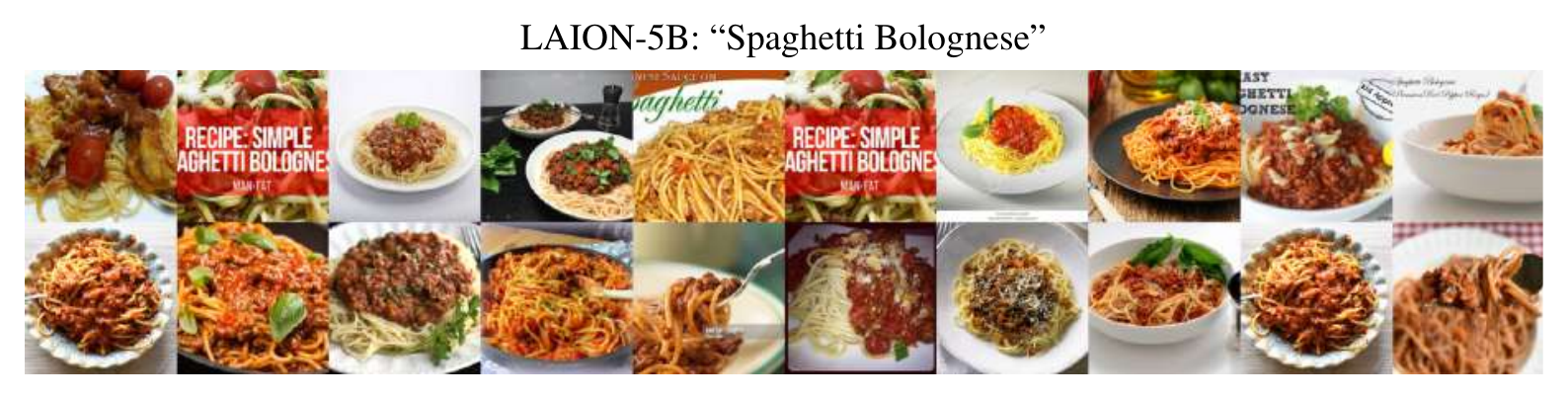} 
    \caption{\textbf{Comparison of different search engines.} We show images for the ``spaghetti bolognese'' class in the Food101 dataset, as well as 20 search results for ``spaghetti bolognese'' from Google Images, Flickr, and LAION5B. Google images are typically well-lit, aesthetic food blog pictures. In comparison, Flickr images are messier, darker, and capture a wider variety of real-world conditions. LAION-5B images lie somewhere in the middle, but contain text overlays much more frequently. Duplicate image results are also common.}
    \label{fig:data_comparison}
\end{figure*}

\cref{fig:other_data_curves} and \cref{tab:search_engine} (main paper) show that Internet Explorer still improves over time, even when the data comes from LAION or Flickr. 
Internet Explorer tends to perform better with Flickr than with LAION, which makes sense. Flickr indexes far more images, as our custom LAION search engine only uses 600M images, so it can return more of the useful photos that Internet Explorer queries for. Flickr is also slightly better at understanding descriptors, although both Flickr and LAION tend to be thrown off by specific or odd descriptors. Nevertheless, Internet Explorer significantly improves the starting model in less than a day of searching and training even with noisy search results and no hyperparameter tuning. Overall, these results prove that Internet Explorer can effectively utilize any window into the Internet's vast ocean of image data.

\section{Are we finding the entire test set online?}
\label{sec:finding_test_set_online}

\newcommand{\blue}[1]{\textcolor{Cerulean}{#1}}
\begin{table}[t]
    \centering
        \begin{tabular}{
            lr@{\hskip 0.12em}
            rr@{\hskip 0.12em}
            rr@{\hskip 0.12em}
            rr@{\hskip 0.12em}
            rr@{\hskip 0.12em}
            rc}
        \toprule
            &
            \multicolumn{2}{l}{Birdsnap} & 
            \multicolumn{2}{l}{Flowers} & 
            \multicolumn{2}{l}{Food} & 
            \multicolumn{2}{l}{Pets} & 
            \multicolumn{2}{l}{VOC2007} \\
        \midrule
        Target test set size                          &  \multicolumn{2}{l}{$1849$} &  \multicolumn{2}{l}{$6142$} & \multicolumn{2}{l}{$25246$} &\multicolumn{2}{l}{$3663$} &\multicolumn{2}{l}{$4952$} \\ 
        \midrule
        \textit{No exploration} \\
            \;\;\; Target training set overlap                          &  $1$ & \blue{$(0.05\%)$} &  $5$ & \blue{$(0.01\%)$}& $34$ & \blue{$(0.13\%)$} & $21$ & \blue{$(0.57\%)$} &  $0$ & \blue{$(0.00\%)$} \\
        \midrule
        \textit{Internet Explorer} \\
            \;\;\;Ours++ (no label set)                         &  $28$ & \ \blue{$(+1.46\%)$} & $11$ & \ \blue{$(+0.01\%)$} & $35$ & \ \blue{$(+0.00\%)$} & $26$ & \ \blue{$(+0.14\%)$}& $1$ & \ \blue{$(+0.02\%)$} \\
            \;\;\;Ours++ (with label set)                       & $57$ & \blue{$(+3.03\%)$} & $27$ & \blue{$(+0.36\%)$}& $35$ &\blue{$(+0.00\%)$} & $43$ &\blue{$(+0.60\%)$} & $1$ &\blue{$(+0.02 \%)$} \\
    \bottomrule
\end{tabular}
    \caption{
        \textbf{Number of leaked test set images}. We use image hashing to compute the fraction of test images present in the set of images downloaded by Internet Explorer. 
        Surprisingly, the training/validation sets of these datasets already leak a small fraction of the test sets---Pets is the most egregious, with $0.57\%$ 
        test leakage.
        For each dataset, we show the test set size, the number of leaked test images, and the percentage of the test set that this represents in \blue{blue}. For each version of our method, we show the total number of leaked images that the model had access to, and the percentage increase this represents over the training set's leakage in \blue{blue}.
        Leakage numbers for our methods include this train-test leakage, since our methods also train on the target dataset's training set. Internet Explorer only finds a tiny fraction of test set images online, and it only uses them for self-supervised training, so there is no \textit{label leakage}. 
        Internet Explorer's large increase in accuracy cannot be explained by test set leakage,
        so its performance gains must come through better feature learning and generalization.
    }
    \label{tab:leakage}
\end{table}

One may be concerned that Internet Explorer improves performance mainly by finding a significant portion of the test set images online. We address this concern by checking how much test data Internet Explorer has downloaded. We use difference hashing (dHash)~\cite{imagehash} to compute hashes for the target dataset's training set, its test set, and the $\approx 10^6$ images that Internet Explorer has downloaded. We compare hashes to determine how many test images were leaked, and we report the number of collisions in \cref{tab:leakage}. Across all five datasets, Internet Explorer finds very few test images. On Birdsnap, Internet Explorer finds 56 additional test set images that were not leaked in the training set, which is roughly $3\%$ of the test set. On the other datasets, the amount leaked ranges from $0.003\%$ to $0.6\%$ of the test set. Additionally, we only perform image-based self-supervised training on downloaded images, so it is much harder for our model to cheat with the leaked images. Overall, given that Internet Explorer outperforms its starting checkpoint by between 5 to 30 percentage points, we conclude that its performance cannot be explained by cheating.

In fact, we view it as a positive that Internet Explorer finds some test set images, because it serves as confirmation that it is learning to search for relevant images---and the most relevant images possible would be those from the dataset itself!
But beyond test set images, Internet Explorer finds a lot of internet images that are very relevant to the dataset. We visualize the top-10 most similar downloaded images for 5 randomly selected test set images from multiple datasets in \cref{fig:internet-nns:pets,fig:internet-nns:food,fig:internet-nns:flowers,fig:internet-nns:voc,fig:internet-nns:in100}.
We use CLIP ViT-L/14 to compute the representations of the test set images, as well as the downloaded images. We then find the top-10 most similar online images given a test set image (from the downloaded images using Ours++ (with label set)).
We see that Internet Explorer finds several images that are very similar but not identical to the test set images.

\begin{figure}
    \centering
    \includegraphics[width=0.9\linewidth]{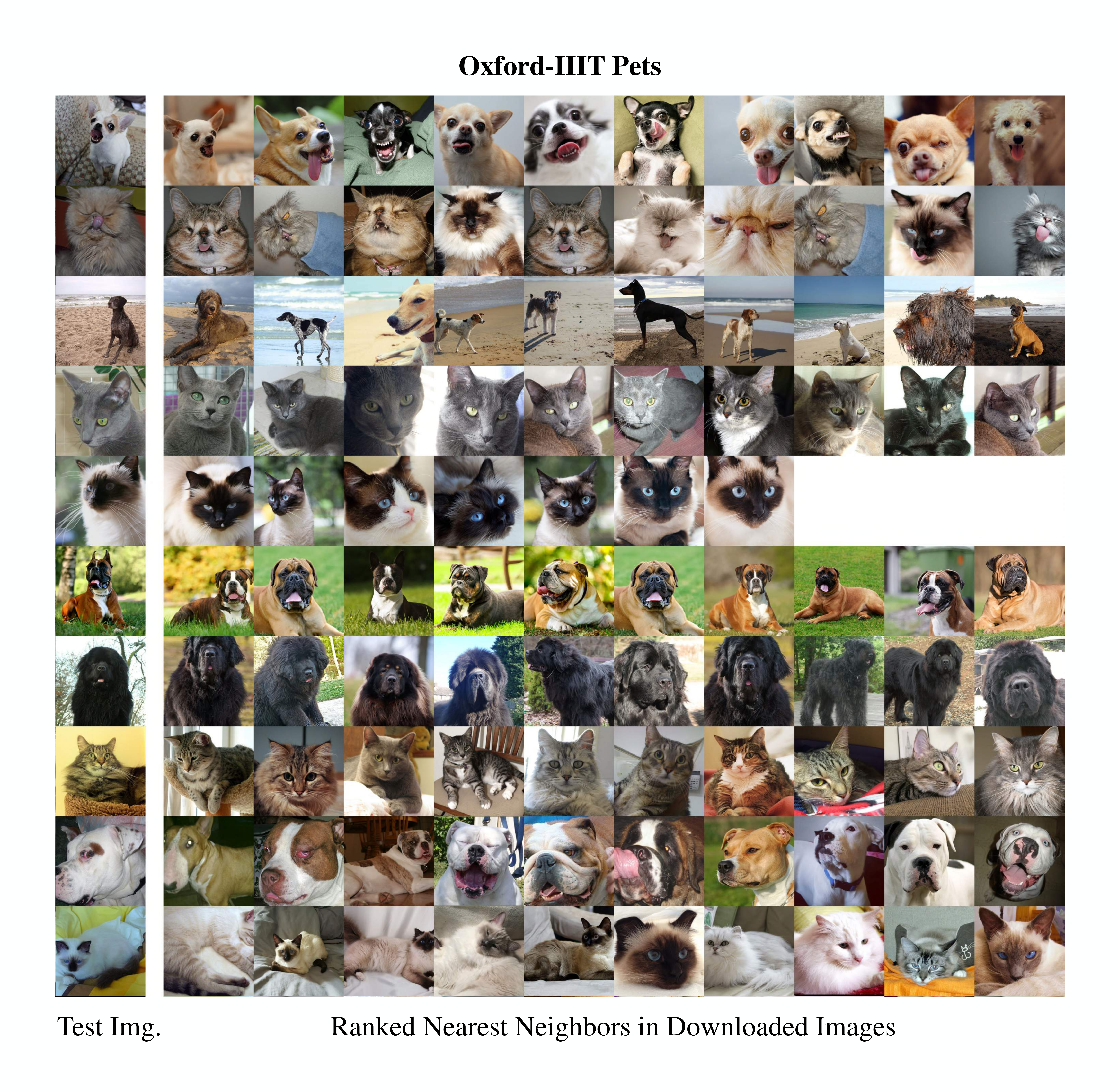}
    \caption{\textbf{Top-10 most similar online images to Pets101}}
    \label{fig:internet-nns:pets}
\end{figure}

\begin{figure}
    \centering
    \includegraphics[width=0.9\linewidth]{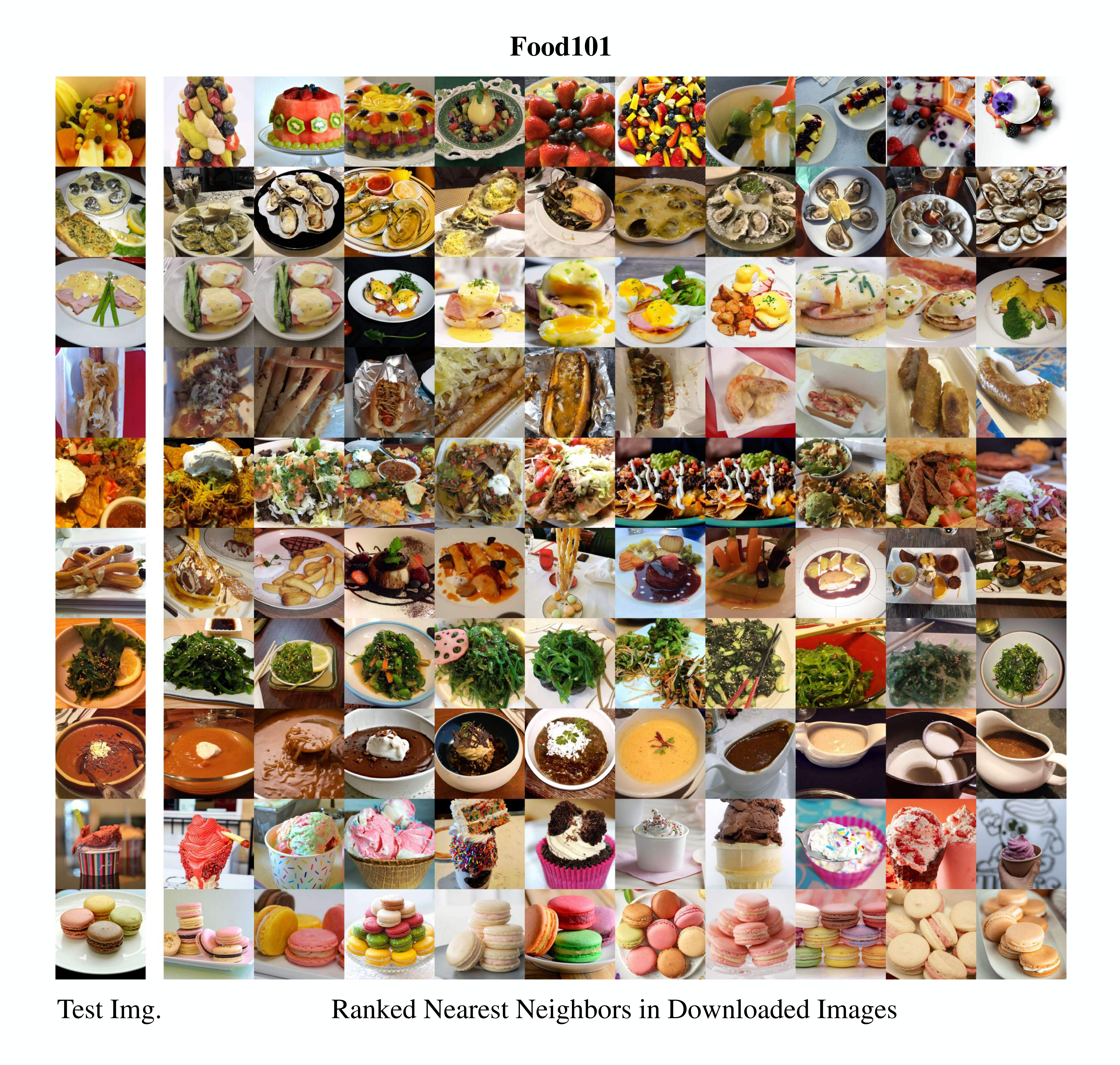}
    \caption{\textbf{Top-10 most similar online images to Food101}}
    \label{fig:internet-nns:food}
\end{figure}

\begin{figure}
    \centering
    \includegraphics[width=0.9\linewidth]{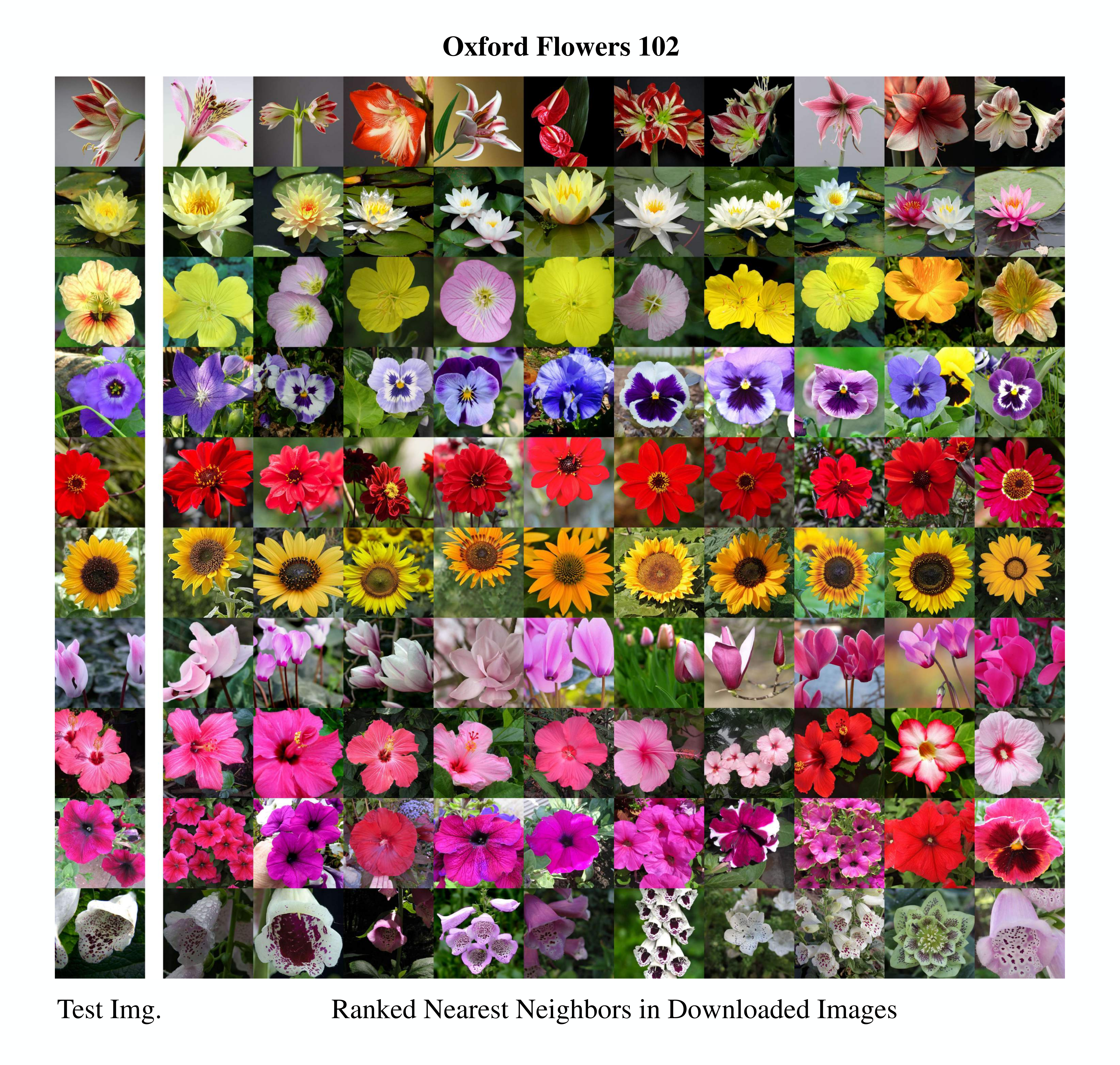}
    \caption{\textbf{Top-10 most similar online images to Flowers102}}
    \label{fig:internet-nns:flowers}
\end{figure}

\begin{figure}
    \centering
    \includegraphics[width=0.9\linewidth]{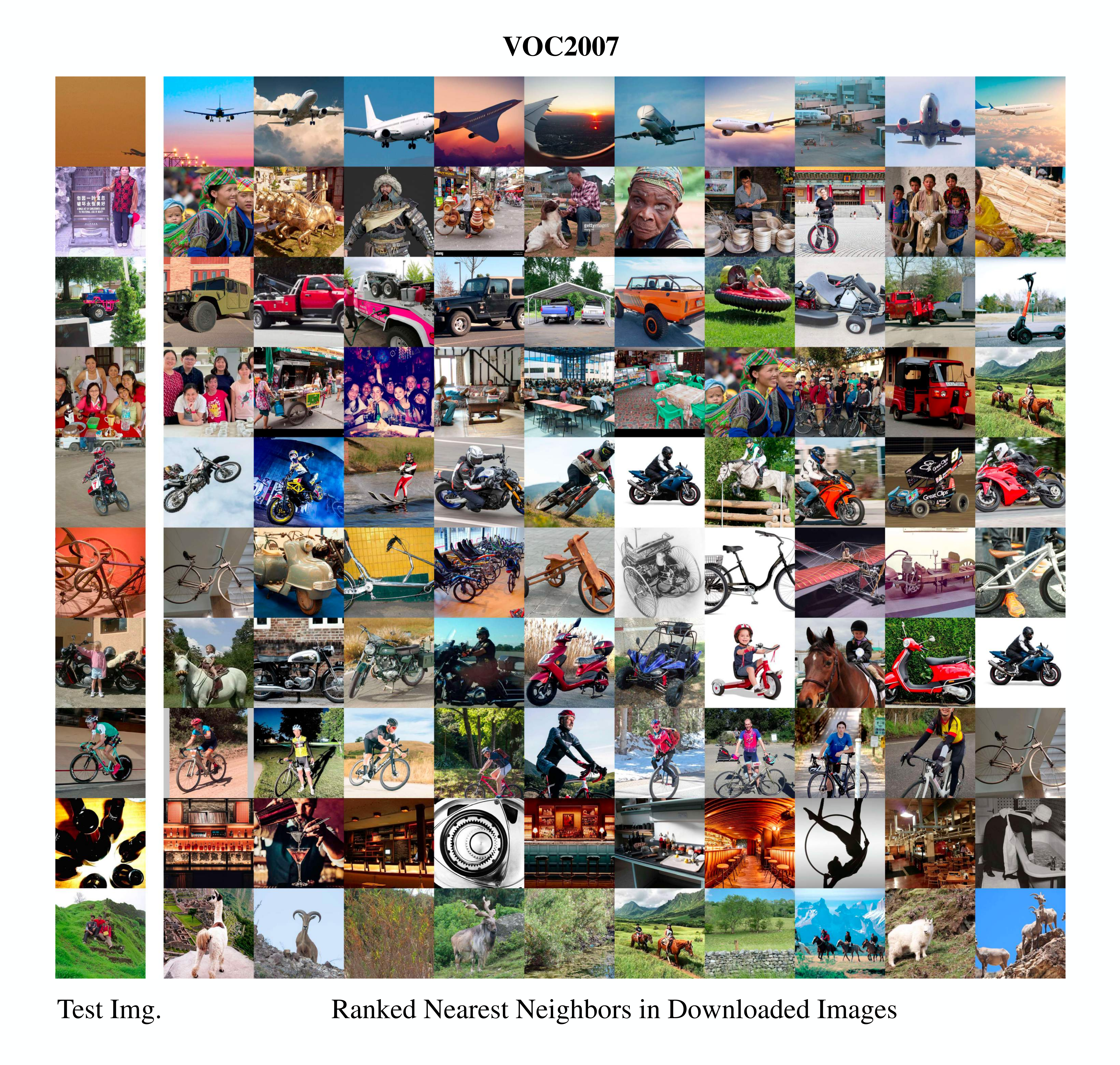}
    \caption{\textbf{Top-10 most similar online images to PASCAL VOC2007}}
    \label{fig:internet-nns:voc}
\end{figure}

\begin{figure}
    \centering
    \includegraphics[width=0.9\linewidth]{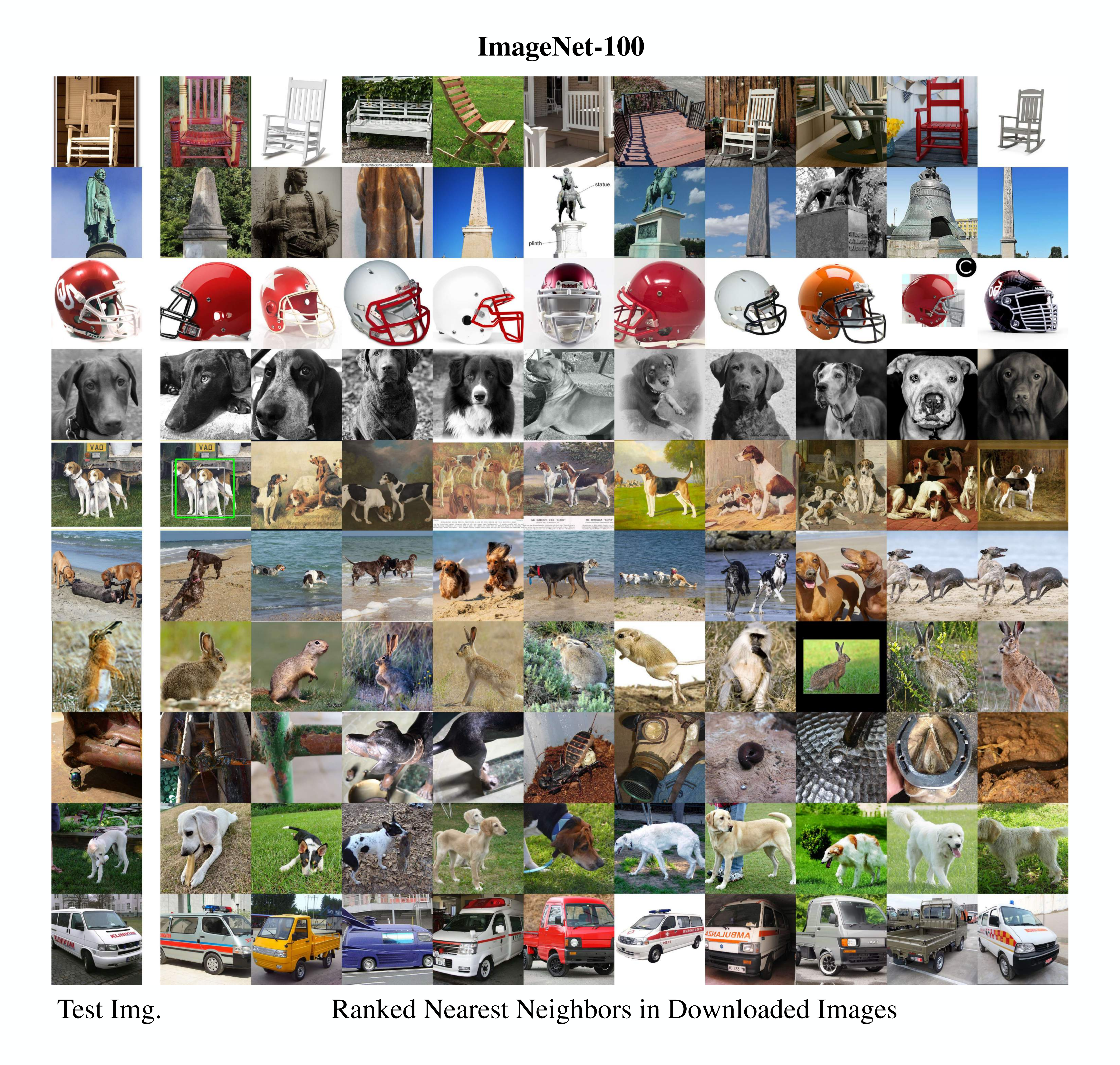}
    \caption{\textbf{Top-10 most similar online images to IN100}}
    \label{fig:internet-nns:in100}
\end{figure}

\section{Method Details}

\subsection{WordNet Lemmas}
\label{sec:wordnet_lemmas}
We draw our concepts from the WordNet hierarchy \cite{miller1995wordnet}, which consists of $146{,}347$ noun lemmas. For reference, here are 32 randomly sampled concepts:
\begin{quote}
{\tt { 
"resolution",
"lodgment",
"phycobilin",
"acidosis",
"widening",
"human face",
"family Crassulaceae",
"sail",
"Ipomoea imperialis",
"Davis",
"prothrombin",
"cease",
"marsh clematis",
"major power",
"chump change",
"madcap",
"junky",
"pere david's deer",
"make-up",
"genus Rumex",
"gape",
"Brachychiton populneus",
"bell morel",
"wain",
"friendly",
"Principe",
"bottle green",
"glycerol trimargarate",
"water-shield",
"San Joaquin River",
"woodsman",
"pin".
}}
\end{quote}

\subsection{GPT-J Descriptor Prompting}
\label{sec:gptj-descriptors}
We use GPT-J-6B~\cite{gpt-j}, a free, open-source autoregressive language model, to generate useful descriptors for a given concept. We use the following prompt template: 
\begin{itemize}
    \item[] \texttt{"What are some words that describe the quality of `\{concept\}'?} 
    \item[] \texttt{The \{concept\} is frail.}
    \item[] \texttt{The \{concept\} is red.}
    \item[] \texttt{The \{concept\} is humongous.}
    \item[] \texttt{The \{concept\} is tall.}
    \item[] \texttt{The \{concept\} is"}
\end{itemize}

We sample completions with a temperature of 0.9 and a max length of 100 tokens. We truncate the completion after the first comma, period, underscore, or newline character (including the special character). If the truncated completion is degenerate and contains a duplicate of the concept, we resample another completion. After successfully sampling a descriptor, we prepend it to the concept and use the resulting phrase as our search query.

For reference, here are 32 randomly sampled descriptors for ``labrador retriever'':
\begin{quote}
{\tt { 
"a good-looking dog",
"very gentle",
"a",
"brown",
"lovable",
"a strong runner",
"a male or a female",
"sturdy",
"agile",
"a strong",
"beautiful",
"a male",
"kind",
"long-haired",
"a male or a female",
"a good-looking dog",
"gentle",
"medium",
"loyal",
"very gentle",
"blue-eyed",
"sturdy",
"blue-eyed",
"a retriever",
"kind",
"loyal",
"large",
"brown",
"good-natured",
"gentle",
"large",
"small".
}}
\end{quote}

\subsection{Concept Vocabulary Size}
\label{sec:concept_vocab_size}
As stated in Section \ref{subsec:text_query_generation}, our vocabulary comprises the $146{,}347$ noun lemmas in the WordNet hierarchy. Thus, in all our experiments, Internet Explorer only searches for WordNet terms (plus the class names, if we have knowledge of the label set). We found that this worked quite well for these standard benchmarks. Note that expanding the vocabulary (e.g., adding technical terms relevant to a specific topic) can easily be done by adding those terms to the list of possible concepts. One easy extension would be to add page titles and frequent unigrams and bigrams from Wikipedia, as was done to generate the CLIP training set~\cite{radford2021learning}. Doing so would expand our vocabulary to roughly $500{,}000$ total concepts. 

\subsection{Query Model Details}
\label{sec:query_model_details}
\paragraph{Temperature for concept distribution}
After estimating scores $r(c_i)$ for each concept $c_i$, we do a temperature-scaled softmax, followed by the tiering operation described in Section 2.6. We compute the temperature $\tau$ such that 
\begin{align}
     \text{SMR} = \frac{\max_i r(c_i) - \min_i r(c_i)}{\tau}
\end{align}
where the ``softmax range'' $\text{SMR} \in \mathbb R$ is the desired gap between the largest and smallest scores after temperature scaling. After the softmax $p(c_i) \propto \exp(r(c_i) / \tau)$, the softmax range determines the likelihood ratio of most likely concept to least likely concept: 
\begin{align}
    \frac{\max_i p(c_i)}{\min_i p(c_i)} &= \frac{\max_i \exp(r(c_i) / \tau)}{\min_i \exp(r(c_i) / \tau)} \\
      &= \exp \left(\frac{\max_i r(c_i) - \min_i r(c_i)}{\tau}\right) \\
    &= \exp(\text{SMR})
\end{align}
Thus, SMR is an easy way to specify the relative likelihood of the highest and lowest scoring concepts and achieve a desired exploration-exploitation balance.

\paragraph{Label set-guided vocabulary}
To reduce our search space in the label set-guided setting, in which we know the English names of the classes a priori, we generate a subset of the WordNet vocabulary that contains only the top-$10\%$ most semantically-relevant concepts to each target dataset.
We use a pre-trained text embedding model~\cite{reimers2019sentence} to generate $384$-dimensional embeddings for each concept in WordNet, using the same template described in Section 2.5 of the main paper: %

\begin{quote}
\vspace{-0.07in}
{\tt {\small \{lemma\} (\{hypernym\}): \{definition\}}}.
\vspace{-0.07in}
\end{quote}

To generate a similar embedding for concepts in target datasets, we use the summary from Wikipedia in place of the definition and the ``category'' of the target dataset (shown in \cref{tab:dataset_categories}) in place of the hypernym:

\begin{quote}
\vspace{-0.07in}
{\tt {\small \{label\} (\{category\}): \{summary\}}}.
\vspace{-0.07in}
\end{quote}

\begin{table}
    \centering
    \begin{tabular}{ll}
    \toprule
        Dataset & Category \\
    \midrule
        Oxford Flowers102 & Flower \\
        Oxford IIIT Pets & Pet \\
        Food101 & Food \\
        Birdsnap & Bird \\
        VOC2007 & Object \\
    \bottomrule
    \end{tabular}
    \caption{\textbf{Target Dataset ``Category''}.
    }
    \label{tab:dataset_categories}
\end{table}

After generating the embeddings for each concept in the target dataset, we find the $k$-NN distance for each WordNet concept to the target dataset embeddings, where $k$ is chosen to be $1/3$ the size of the class label set.
We then rank the concepts in WordNet by the distance and take the closest $10\%$ of terms as our subset. This subset is used for all methods in the label set-guided setting, including the random exploration methods.

\subsection{Training Details}
In each iteration, we download roughly 25k candidate images, since we download up to 100 images for each of the 256 queries. Given this set $\mathcal C$ of candidate images, we sample $\text{PCR} \times |\mathcal C|$ images from the union of the replay buffer $\mathcal B$ and the target dataset training images $\mathcal D$. PCR (past data to candidate data ratio) is a scalar value that determines how much old data vs new data to train on at every iteration. We set $\text{PCR}=2$ for all experiments. We perform $10$ epochs of training over the union of the new candidate data and the sampled replay buffer and target dataset images. 

\subsection{Hyperparameters}

\cref{tab:hyperparameters} shows our hyperparameter values, which are shared across datasets. We perform minimal hyperparameter tuning and copy most of the values from the MoCo-v3~\cite{chen2021empirical} ResNet-50 configuration. 
Our code has been released at \href{https://github.com/internet-explorer-ssl/internet-explorer}{\url{https://github.com/internet-explorer-ssl/internet-explorer}}, which we hope will clarify any remaining implementation details and make it easy for the community to reproduce and build on our work. 
\begin{table}
    \centering
    \begin{tabular}{ll}
    \toprule
        Hyperparameter & Value \\
    \midrule
        Architecture & Resnet-50 \cite{he2016deep} \\
        Optimizer & LARS \cite{you2017large} \\
        Batch size & $224$ \\
        Learning rate & $0.8 \times \frac{224}{256}$ \\
        Learning rate schedule & constant \\
        MoCo momentum & $0.9985$ \\
        RandomResizedCrop min crop area & $0.2$ \\
        Queries per iteration & $256$ \\
        Requested images per query & $100$ \\
        Min images per query & $10$ \\    
        Softmax range (SMR) & $3$ \\
        PCR & $2$ \\
        Epochs per iteration & $10$ \\
    \bottomrule
    \end{tabular}
    \caption{\textbf{Internet Explorer hyperparameters}.}
    \label{tab:hyperparameters}
\end{table}

\subsection{Image Licenses}
Internet Explorer uses images that were indexed by a web crawler (Google Images and LAION) or uploaded to Flickr. The images and their rights belong to their respective owners; we use, download, and train on them under fair use guidelines for research.

\subsection{Domain Dataset Descriptor Details}
\label{sec:fmow_wilds_details}
When targeting a niche domain dataset---in which a practitioner almost surely has useful a priori knowledge to impart---it is simple to modify Internet Explorer to accelerate learning. Rather than using GPT to help provide variety to our queries for a concept, we can use leverage our practitioner's domain knowledge to help hone our search from the start.

This amounts to defining a list of ``descriptors'' that help return relevant results for arbitrary queries.
For example, the below list of 16 descriptors was selected for the FMoW satellite dataset to help return satellite imagery when prepended to concepts (\eg, ``tennis court'') instead of their more canonical views.
This list was hand-selected through trial \& error using a variety of randomly selected concepts.
Note that this static list replaces the GPT-J generated descriptors for this dataset. 

FMoW-WILDS Descriptors:
\begin{quote}
{\tt {
    "a centered satellite photo of",
    "a satellite photo of",
    "a google earth photo of",
    "satellite view of",
    "high resolution satellite",
    "high resolution satellite imagery of",
    "aerial satellite",
    "aerial satellite view",
    "aerial satellite view of",
    "satellite imagery, centered photo of",
    "satellite imagery, photo of",
    "military highest resolution satellite imagery of",
    "NASA imagery of",
    "geo high resolution satellite",
    "land cover satellite image of",
    "european satellite close up aerial image of",
    "super high resolution highest resolution satellite imagery"
}}
\end{quote}

\section{Proof of \cref{lemma:speedup}}
\label{sec:proof}
Here, we prove \cref{lemma:speedup} from \cref{subsec:provable_speedup}, which we repeat below: 
\lemmaspeedup*

\begin{proof}
This problem is a variant of the coupon collector problem. Let's first compute $T_{base}$ as the sum of expected times $t_i$ to identify the next relevant concept. 
\begin{align}
    T_{base} &= \sum_{i=1}^{cs} t_i \\
             &= \sum_{i=1}^{cs} \frac{1}{p_i} \\
             &= \sum_{i=1}^{cs} \frac{n}{cs + 1 - i} \\
             &= n \sum_{i=1}^{cs} \frac{1}{cs + 1 - i} \\
             &= n H_{cs}
\end{align}
where $H_{cs}$ is the $cs$th harmonic number. Similarly, we can compute $T_{GPR}$ as the sum of expected times $t_i$ to identify the next relevant cluster.  
\begin{align}
    T_{GPR} &= \sum_{i=1}^{c} t_i \\
             &= \sum_{i=1}^{c} \frac{1}{p_i} \\
             &= \sum_{i=1}^{c} \frac{n}{s (c + 1 - i)} \\
             &= \frac{n}{s} \sum_{i=1}^{c} \frac{1}{c + 1 - i} \\
             &= \frac{nH_{c}}{s}
\end{align}
The speedup is then $\frac{T_{base}}{T_{GPR}} = s \frac{H_{cs}}{H_c} \approx s \log s$.
\end{proof}

We find that in practical settings (e.g., the Pets example analyzed in \cref{fig:reward_over_training}), we can accurately predict how many samples are required to discover all useful concepts. If the vocabulary size is $n \approx 150{,}000$, the number of clusters is about $c = 2$ (one for cats and one for dogs), and the size of each cluster is about $150$, then $T_{GPR} = 1500$, which roughly matches the $9$ iterations $\times 256$ queries/iteration $= 1792$ queries it took to discover both cats and dogs in the Pets dataset.

\section{Progression of downloaded images}
\label{sec:progression_downloaded_imgs}
Just as \cref{fig:progression} in the main paper showed how Internet Explorer progressively discovers useful data when targeting the Pets dataset, \cref{fig:birdsnap_progression,fig:flowers_progression,fig:food_progression,fig:voc_progression} show the progression of downloaded images when targeting Birdsnap, Flowers, Food, and VOC respectively. Note that this analysis is in the self-supervised setting, where Internet Explorer has no knowledge of the label set. Thus, it is quite surprising that Internet Explorer is able to identify relevant images in so few iterations. 

\begin{figure*}[b]
    \centering
    \includegraphics{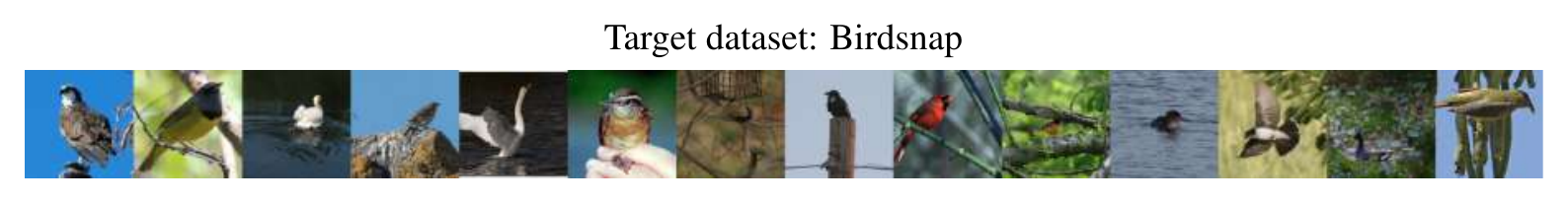} \\
    \vspace{-0.8em}
    \includegraphics{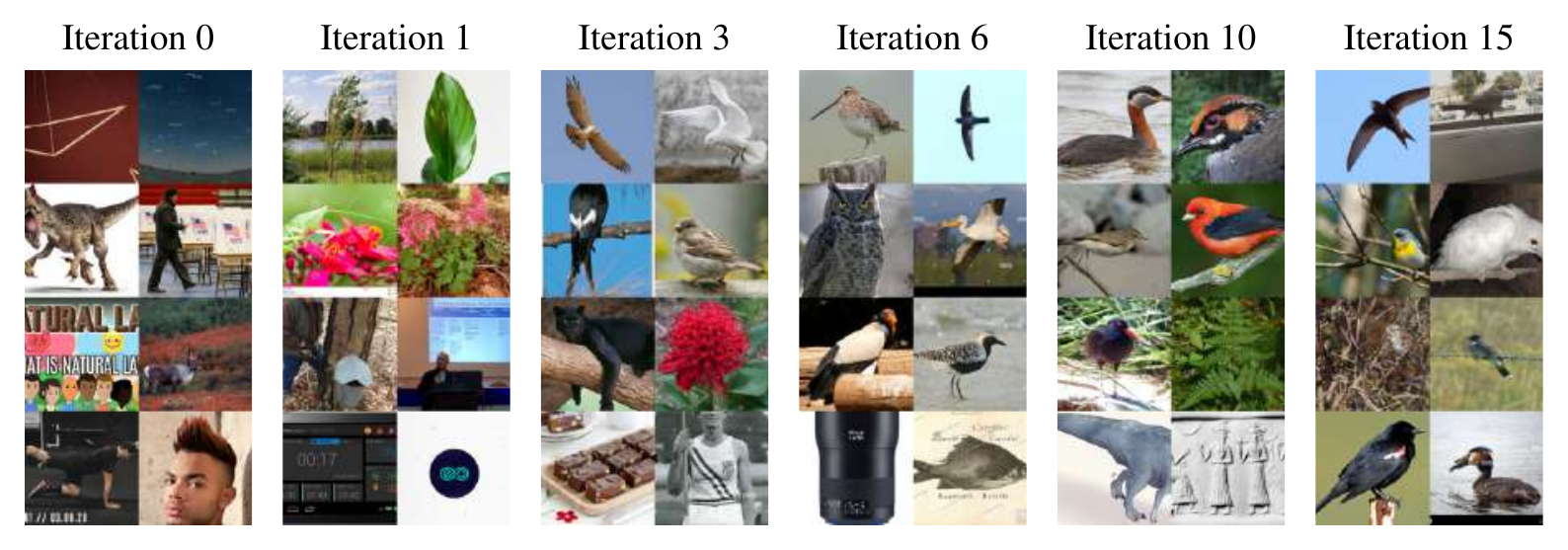}
    \caption{\textbf{Progression of downloaded Birdsnap images.} This corresponds to Ours++ without using label set information. }
    \label{fig:birdsnap_progression}
\end{figure*}

\begin{figure*}
    \centering
    \includegraphics{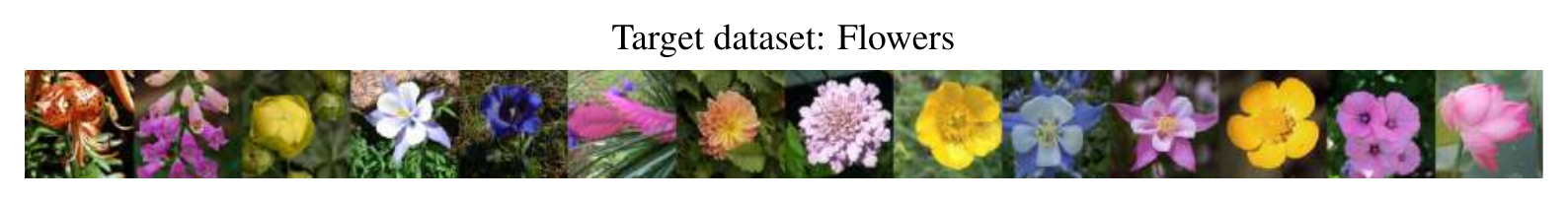} \\
    \vspace{-0.8em}
    \includegraphics{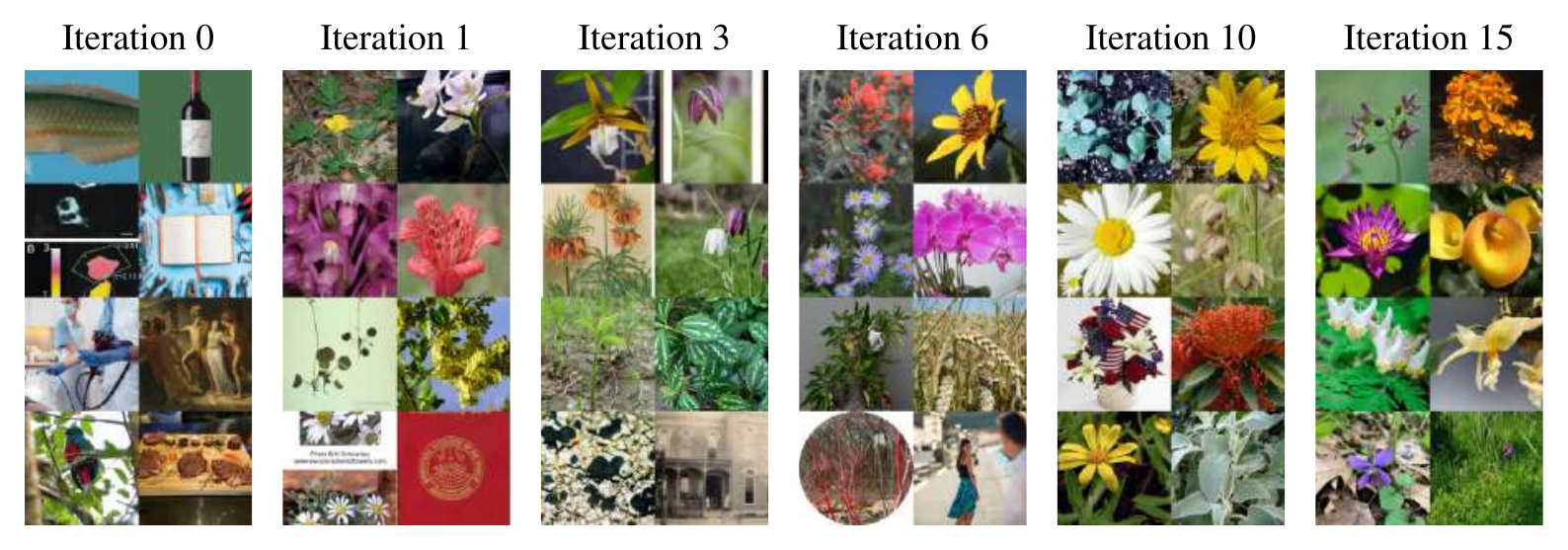}
    \caption{\textbf{Progression of downloaded Flowers images.} This corresponds to Ours++ without using label set information. }
    \label{fig:flowers_progression}
\end{figure*}

\begin{figure*}
    \centering
    \includegraphics{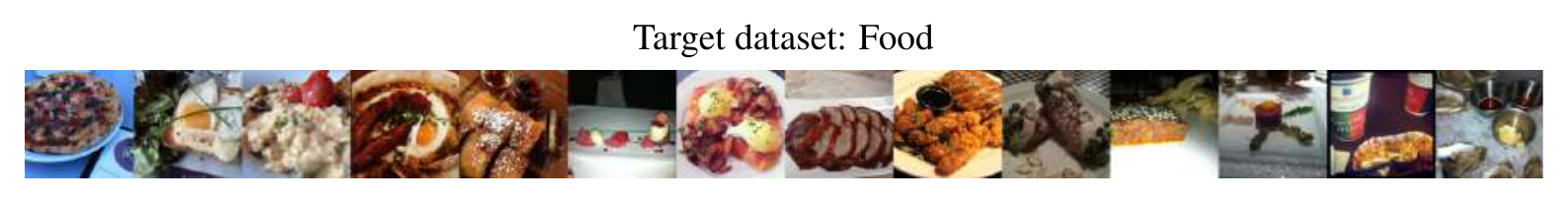} \\
    \vspace{-0.8em}
    \includegraphics{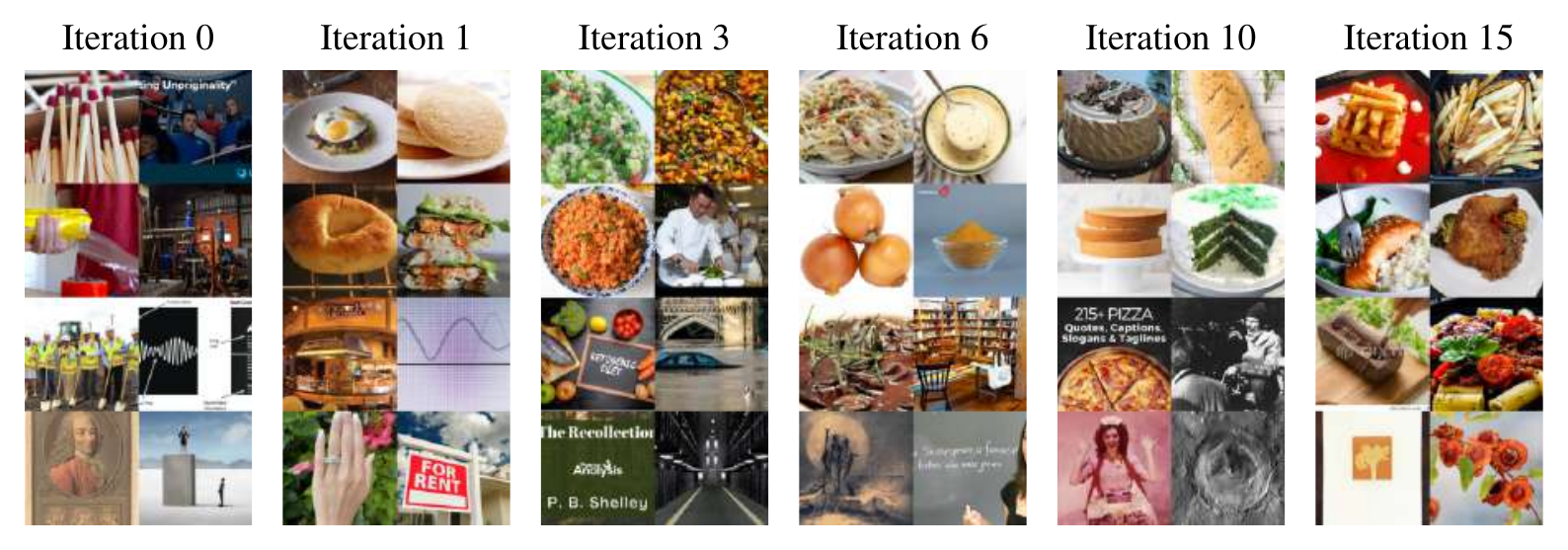}
    \caption{\textbf{Progression of downloaded Food images.} This corresponds to Ours++ without using label set information. }
    \label{fig:food_progression}
\end{figure*}

\begin{figure*}
    \centering
    \includegraphics{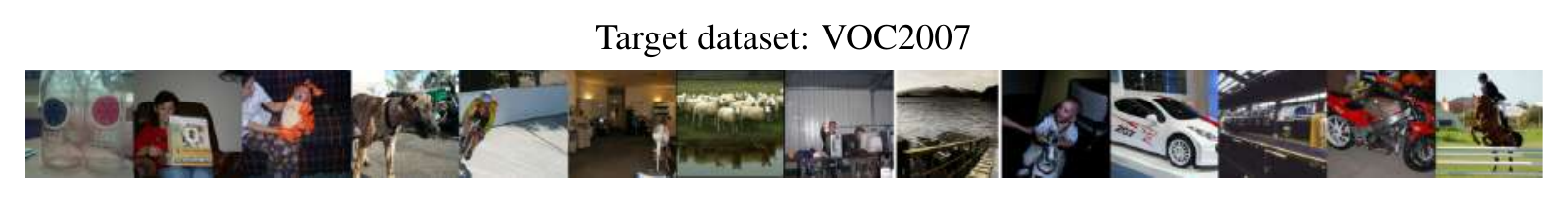} \\
    \vspace{-0.8em}
    \includegraphics{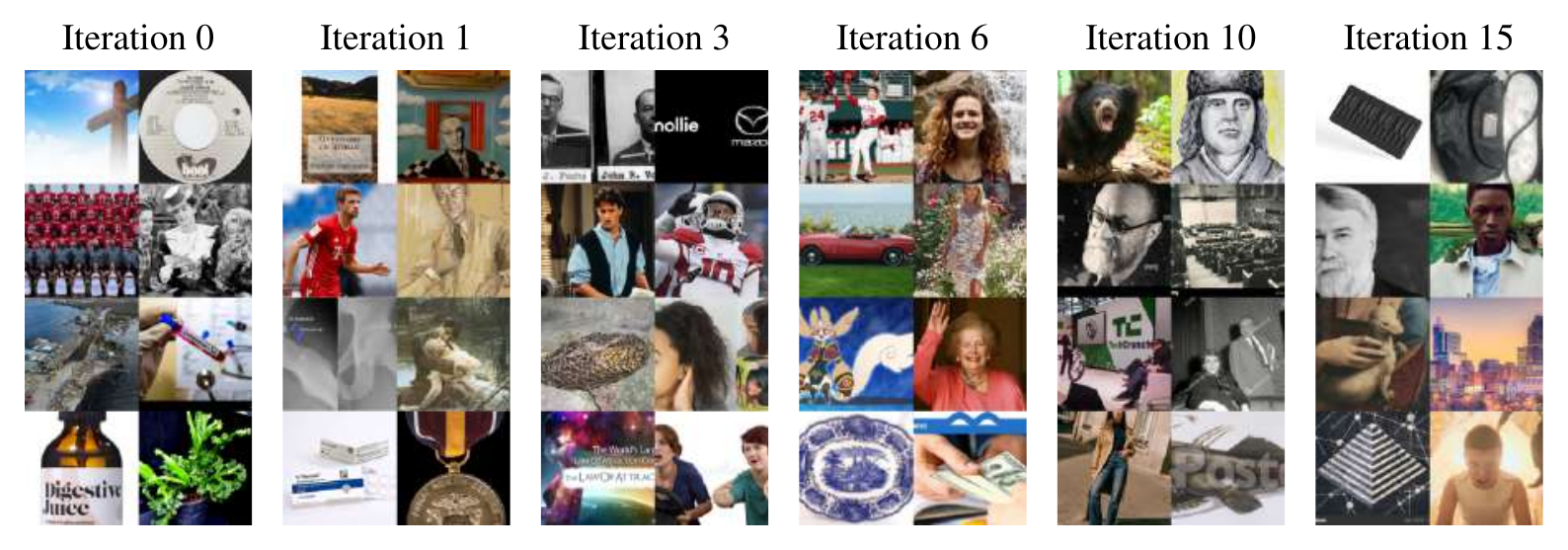}
    \caption{\textbf{Progression of downloaded VOC2007 images.} This corresponds to Ours++ without using label set information. }
    \label{fig:voc_progression}
\end{figure*}

\end{document}